\renewcommand\@biblabel[1]{#1.}
\algnewcommand{\algorithmicgoto}{\textbf{go to line}}%
\algnewcommand{\Goto}[1]{\algorithmicgoto~\ref{#1}}%
\newcommand{\prob}{\ensuremath{\mathsf{Pr}}}
\newcommand{\satisfying}[1]{\ensuremath{sol({#1})}} %
\newcommand{\ProjectSatisfying}[2]{\ensuremath{sol({#1})_{\downarrow {#2}}}}
\newcommand{\PAC}{\ensuremath{\mathsf{PAC}}}
\newcommand{\UNSAT}{\ensuremath{\mathsf{UNSAT}}}
\newcommand{\ApproxMCFour}{{ApproxMC4}}
\newcommand{\ApproxMC}{ApproxMC}
\newcommand{\IS}{IS}
\newcommand{\Sup}[1]{\ensuremath{\mathsf{Sup}({#1})}}
\newcommand{\Prob}[1]{\ensuremath{\mathsf{Pr}\left[{#1}\right]}}
\newcommand{\PP}{\ensuremath \mathcal{P}}
\newcommand{\II}{\ensuremath \mathcal{I}}
\newcommand{\GG}{\ensuremath \mathcal{G}}
\newcommand{\UU}{\ensuremath \mathcal{U}}
\newcommand{\LL}{\ensuremath \mathcal{L}}
\newcommand{\QQ}{\ensuremath \mathcal{Q}}
\newcommand{\SSS}{\ensuremath \mathcal{S}}
\newcommand{\proj}[2]{\ensuremath {{#1}}_{\downarrow {{#2}}}}
\newcommand{\findubs}{\textsc{FindUBS}}
\newcommand{\ChooseNextVar}{\ensuremath \mathsf{ChooseNextVar}}
\newcommand{\UBS}{{UBS}}
\newcommand{\arjunubs}{\ensuremath{\mathsf{UBCount}}}
\newcommand{\arjun}{\ensuremath{\mathsf{Arjun}}}
\begin{document}
\title{Projected Model Counting: Beyond Independent Support}

\author{Jiong Yang\inst{1} \and
Supratik Chakraborty\inst{2} \and
Kuldeep S. Meel\inst{1}}
\institute{School of Computing, National University of Singapore \\
 \and
Indian Institute of Technology, Bombay\\
}

\maketitle              %
\begin{abstract}
The past decade has witnessed a surge of interest in practical
techniques for projected model counting.  Despite significant
advancements, however, performance scaling remains the Achilles' heel
of this field.  A key idea used in modern counters is to count models
projected on an \emph{independent support} that is often a small
subset of the projection set, i.e. original set of variables on which
we wanted to project.  While this idea has been effective in scaling
performance, the question of whether it can benefit to count models
projected on variables beyond the projection set, has not been
explored.  In this paper, we study this question and show that
contrary to intuition, it can be beneficial to project on variables
beyond the projection set.
In applications such as verification of binarized neural networks,
quantification of information flow, reliability of power grids etc., a
good upper bound of the projected model count often suffices.  We show
that in several such cases, we can identify a set of variables, called
\emph{upper bound support (UBS)}, that is not necessarily a subset of
the projection set, and yet counting models projected on UBS
guarantees an upper bound of the true projected model count.
Theoretically, a UBS can be exponentially smaller than the smallest
independent support. Our experiments show that even otherwise,
UBS-based projected counting can be more efficient than independent
support-based projected counting, while yielding bounds of very high
quality.  Based on extensive experiments, we find that UBS-based
projected counting can solve many problem instances that are beyond
the reach of a state-of-the-art independent support-based projected
model counter.

\keywords{Model Counting \and Hashing-based Techniques \and Independent Support}
\end{abstract}

\section{Introduction}

Given a Boolean formula $\varphi$ over a set $X$ of variables, and a
subset $P$ of $X$, the problem of projected model counting requires us
to determine the number of satisfying assignments of $\varphi$
projected on $\mathcal{P}$. Projected model counting is \# NP-complete
in general~\cite{V79a}\footnote{A special case where $\mathcal{P} = X$
is known to be \#P-complete~\cite{V79}}, and has several important
applications ranging from verification of neural
networks~\cite{BSSM+19}, hardware and software
verification~\cite{TW21}, reliability of power
grids~\cite{DMPV17}, probabilistic inference~\cite{R96,SBK05,EGSS13b},
and the like.  Not surprisingly, the problem has attracted attention
from both theoreticians and practitioners over the
years~\cite{V79,S83,GSS06,CMV13b,CMV16,SM19,SGM20}.  In particular,
there has been recent strong interest in techniques that scale in
practice and also provide provable guarantees on the quality of
computed counts.

Over the past decade, hashing-based techniques have emerged as a
promising approach to projected model counting, since they scale
relatively well in practice, while also providing strong approximation
guarantees~\cite{GSS06,CMV13a,CMV13b,EGSS13b,MVCF+16}. The core idea
of hashing-based projected model counting is to partition the
projected solution space into {\em roughly equal} small cells such
that one can estimate the projected model count or sample projected
solutions uniformly by picking a random cell. The practical
implementation of these techniques rely on the use of random XOR
clauses constructed from variables in the projection set
$\PP$. Starting with a CNF formula $\varphi$, we therefore end up with
a conjunction of CNF and XOR clauses, also referred to as a CNF+XOR
formula, that must be fed to a backend SAT solver. The standard
construction of a random XOR clause selects each variable in $\PP$ for
inclusion in the clause with probability $1/2$; hence the expected
clause size is $|\mathcal{P}|/2$.  Unfortunately, the
performance of modern SAT solvers on CNF+XOR formulas is heavily
impacted by the sizes of XOR clauses~\cite{GHSS07a,CMV14,ZCSE16}.
This explains why designing hash functions with {\em sparse} XOR
clauses is important in scaling hashing-based projected model
counting~\cite{GHSS07a,EGSS13b,EGSS14,AD16,IMMV16,AT17,AHT18,AM20}.

A practically effective idea to address the aforementioned problem
was introduced in~\cite{CMV14}, wherein the notion of an
\emph{independent support} was introduced in the context of
(projected) model counting.  Informally, a set $\mathcal{I} \subseteq
\mathcal{P}$ is an independent support of $\mathcal{P}$ if whenever
two projected solutions of $\varphi$ agree on the values of variables
in $\mathcal{I}$, they also agree on the values of all variables in
$\mathcal{P}$.  It was shown in~\cite{CMV14} that random XOR clauses
over the independent support suffice to provide all desired
theoretical guarantees~\cite{CMV14}.  Furthermore, for benchmarks
arising in practice, $|\II|$ was found to be often much smaller than
$|\PP|$.  Hence, using $\II$ instead of $\PP$ in the construction of
random XOR clauses improved the runtime performance of hashing-based
approximate (projected) counters and samplers, often by several orders
of magnitude~\cite{IMMV16}. Subsequently, independent supports have also
been found to be useful in the context of exact (projected) model
counting~\cite{LLM16,SRSM19}.

The runtime performance improvements achieved by (projected) model
counters over the past decade have significantly broadened the scope
of their applications, which, in turn, has brought the focus sharply
back on performance scalability.  Importantly, for several important
applications such as neural network verification~\cite{BSSM+19},
quantified information flow~\cite{BEHLMQ18}, software
reliability~\cite{TW21}, reliability of power grids~\cite{DMPV17},
etc. we are primarily interested in good upper bound estimates of
projected model counts.  As apty captured by Achlioptas and
Theodoropoulos~\cite{AT17}, while obtaining ``lower bounds are easy''
in the context of projected model counting, such is not the case for
good upper bounds.  Therefore, scaling up to large problem instances
while obtaining good upper bound estimates remains an important
challenge in this area.

The primary contribution of this paper is a new approach to selecting
variables on which to project solutions, with the goal of improving
scalability of hashing-based projected counters when good upper bounds
of projected counts are of interest.  Towards this end, we generalize
the notion of an independent support $\II$.  Specifically, we note
that the restriction $\mathcal{I} \subseteq \mathcal{P}$ ensures a
two-way implication: if two solutions agree on $\mathcal{I}$, then
they also agree on $\mathcal{P}$, and vice-versa.  Since we are
interested in upper bounds, we relax this requirement to a one-sided
implication, i.e., we wish to find a set $\UU \subseteq X$ (not
necessarily a subset of $\mathcal{P})$ such that if two solutions
agree on $\UU$, then they agree on $\mathcal{\mathcal{P}}$, but not
necessarily vice versa.  We call such a set $\UU$ an \emph{Upper Bound
Support}, or {\UBS} for short.  We show that using random XOR clauses
over {\UBS} in hashing-based projected counting yields provable upper
bounds of the projected counts. We also show some important properties
of {\UBS}, including an exponential gap between the smallest {\UBS}
and the smallest independent support for a class of problems.  Our
study suggests a simple iterative algorithm, called {\findubs}, to determine {\UBS}, that can be fine-tuned heuristically.  

To evaluate the effectiveness of our
idea, we augment a state-of-the-art model counter, {\ApproxMCFour},
with {\UBS} to obtain {\UBS}+{\ApproxMCFour}.  Through an extensive
empirical evaluation on $2632$ benchmark instances arising from
diverse domains, we compare the performance of {\UBS}+{\ApproxMCFour}
with {\IS}+{\ApproxMCFour}, i.e. {\ApproxMCFour} augmented with
independent support computation.  Our experiments show that
{\UBS}+{\ApproxMCFour} is able to solve 208 more instances than
{\IS}+{\ApproxMCFour}. Furthermore, the geometric mean of the absolute
value of log-ratio of counts returned by {\UBS}+{\ApproxMCFour} and
{\IS}+{\ApproxMCFour} is 1.32, thereby validating the claim that using
{\UBS} can lead to empirically good upper bounds. In this context, it is worth remarking that a recent study~\cite{APM21} comparing different partition function\footnote{The problem of partition function estimation is known to be \#P-complete and reduces to model counting; the state of the art techniques for partition function estimates are  based on model counting~\cite{CD05}.} estimation  techniques labeled a method with the absolute
value of log-ratio of counts less than 5 as a {\em reliable method}. 

The rest of the paper is organized as follows. We present notation and
preliminaries in Section~\ref{sec:prelim}. To situate our
contribution, we present a survey of related work in
Section~\ref{sec:background}.  We then present the primary technical
contributions of our work, including the notion of {\UBS} and an
algorithmic procedure to determine {\UBS}, in
Section~\ref{sec:theory}. We present our empirical evaluation in
Section~\ref{sec:evaluation}, and finally conclude in
Section~\ref{sec:conclusion}.

\section{Notation and Preliminaries}\label{sec:prelim}

Let $X = \{x_1, x_2 \ldots x_n\}$ be a set of propositional variables
appearing in a propositional formula $\varphi$.  The set $X$ is called
the \emph{support} of $\varphi$, and denoted $\Sup{\varphi}$.  A
\emph{literal} is either a propositional variable or its negation.
The formula $\varphi$ is said to be in Conjunctive Normal Form (CNF)
if $\varphi$ is a conjunction of \emph{clauses}, where each
\emph{clause} is disjunction of literals.  An \emph{assignment}
$\sigma$ of $X$ is a mapping $X \rightarrow \{0, 1\}$.
If $\varphi$ evaluates to $1$ under assignment $\sigma$, we say that
$\sigma$ is a \emph{model} or \emph{satisfying assignment} of
$\varphi$, and denote this by $\sigma \models \varphi$. For every $\PP
\subseteq X$, the \emph{projection} of $\sigma$ on $\PP$, denoted
$\proj{\sigma}{\PP}$, is a mapping $\PP \rightarrow \{0, 1\}$ such
that $\proj{\sigma}{\PP}(v) = \sigma(v)$ for all $v \in \PP$.
Conversely we say that an assignment $\widehat{\sigma}: \PP
\rightarrow \{0,1\}$ can be \emph{extended} to a model of $\varphi$ if
there exists a model $\sigma$ of $\varphi$ such that $\widehat{\sigma}
= \proj{\sigma}{\PP}$. The set of all models of $\varphi$ is denoted
$\satisfying{\varphi}$, and the projection of this set on $\PP
\subseteq X$ is denoted $\ProjectSatisfying{\varphi}{\PP}$.  We call
the set $\PP$ a \emph{projection set} in our subsequent
discussion\footnote{Projection set has also been referred to as
sampling set in prior work~\cite{CMV14,SGM20}.}.

The problem of \emph{projected model counting} is to compute
$|\ProjectSatisfying{\varphi}{\PP}|$ for a given CNF formula $\varphi$
and projection set $\PP$.  An exact projected model counter is a
deterministic algorithm that takes $\varphi$ and $\PP$ as inputs and
returns $|\ProjectSatisfying{\varphi}{\PP}|$ as output.  A
\emph{probably approximately correct} (or \PAC) projected model
counter is a probabilistic algorithm that takes as additional inputs a
tolerance $\varepsilon>0$, and a confidence parameter $\delta \in (0,
1]$, and returns a count $c$ such that
  $\prob\Big[\frac{|\ProjectSatisfying{\varphi}{\PP}|}{(1+\varepsilon)}
    \le c \le (1+\varepsilon)\cdot|\ProjectSatisfying{\varphi}{\PP}|\Big]
  \ge 1-\delta$, where $\prob[E]$ denotes the probability of event $E$.

\begin{definition}\label{def:is}
 Given a formula $\varphi$ and a projection set $\PP \subseteq
 \Sup{\varphi}$, a subset of variables $\II \subseteq \PP$ is called
 an \emph{independent support (IS)} of $\PP$ in $\varphi$ if for every
 $\sigma_1, \sigma_2 \in \satisfying{\varphi}$, we have
 $\big(\proj{\sigma_1}{\II} = \proj{\sigma_2}{\II}\big) \Rightarrow
 \big(\proj{\sigma_1}{\PP} = \proj{\sigma_2}{\PP}\big)$.
\end{definition}

 Since $\big(\proj{\sigma_1}{\PP} = \proj{\sigma_2}{\PP}\big) \Rightarrow
\big(\proj{\sigma_1}{\II} = \proj{\sigma_2}{\II}\big)$ holds trivially when $\II
\subseteq \PP$, it follows from Definition~\ref{def:is} that if $\II$
is an independent support of $\PP$ in $\varphi$, then
$\big(\proj{\sigma_1}{\II} = \proj{\sigma_2}{\II}\big) \Leftrightarrow
\big(\proj{\sigma_1}{\PP} = \proj{\sigma_2}{\PP}\big)$.  
Empirical studies have shown that the size of an independent support
$\II$ is often significantly smaller than that of the original
projection set $\PP$~\cite{CMV14,IMMV16,LLM16,SRSM19}. This has been
exploited successfully in the design of practically efficient {\PAC}
hashing-based model counters like ApproxMC4~\cite{SGM20}.
Specifically, the overhead of finding a small independent support
$\II$ is often more than compensated by the efficiency obtained by
counting projections of satisfying assignments on $\II$, instead of on
the original projection set $\PP$.

\section{Background}\label{sec:background}

State-of-the-art hashing-based projected model counters work by
randomly partitioning the projected solution space of a given CNF
formula into roughly equal small ``cells'', followed by counting and
suitably scaling the number of projected solutions in a randomly
picked cell. The random partitioning is usually achieved using
specialized hash functions, implemented by adding random XOR clauses
to the given CNF formula.  There are several inter-related factors
that affect the scalability of such hashing-based model counters, and
isolating the effect of any one factor on performance is extremely
difficult.  Nevertheless, finding satisfying assignments of the
CNF+XOR formula has been empirically found to be the most significant
bottleneck.  The difficulty of solving such a formula depends not only
on the original CNF formula, but also on the random XOR clauses that
are added to it.  Specifically, the average size (i.e. number of
literals) in the added XOR clauses is known to correlate positively
with the time taken to solve CNF+XOR formulas using modern
conflict-driven clause learning (CDCL) SAT solvers~\cite{IMMV16}. This
has motivated interesting research that aims to reduce the average
size of XOR clauses added to a given CNF formula in the context of
hashing-based approximate model
counting~\cite{CMV14,AD16,EGSS13b,EGSS14,AT17,AHT18,AM20}.

The idea of using random XOR clauses over an independent support $\II$
that is potentially much smaller than the projection set $\PP$ was
first introduced in~\cite{CMV14}.
This is particularly effective when a small subset of variables
functionally determines the large majority of variables in a formula,
as happens, for example, when Tseitin encoding is used to transform a
non-CNF formula to an equisatisfiable CNF formula.  State-of-the-art
hashing-based model counters, viz. {\ApproxMCFour}~\cite{SGM20},
therefore routinely use random XOR clauses over the independent
support.
While the naive way of choosing each variable in $\II$ with
probability $1/2$ gives a random XOR clause with expected size
$|\II|/2$, researchers have also explored whether specialized hash
functions can be defined over $\II$ such that the expected size of a
random XOR clause is $p\cdot|\II|$, with $p <
1/2$~\cite{AD16,EGSS13b,EGSS14,AT17,AHT18,AM20}. The works
of~\cite{AD16,EGSS13b,EGSS14,AT17,AHT18} achieved this goal while
guaranteeing a constant factor approximation of the reported count.
The work of~\cite{AM20} achieved a similar reduction in the expected
size of XOR clauses, while guaranteeing PAC-style bounds.  The latter
work also turns out to be more scalable in practice.

While earlier work focused on independent supports that are
necessarily subsets of the projection set $\PP$, we break free from
this restriction and allow any subset of variables in the construction
of random XOR clauses as long as the model count projected on the
chosen subset bounds the model count projected on $\PP$ from above.
This gives us more flexibility in constructing CNF+XOR formulas, which
as our experiments confirm, leads to improved overall performance of
projected model counting in several cases.  Since we guarantee upper
bounds of the desired counts, theoretically, our approach yields an
\emph{upper bounding projected model counter}.  Nevertheless, as our
experiments show, the bounds obtained using our approach are
consistently very close to the projected counts reported using
independent support.  Therefore, in practice, our approach gives high
quality bounds on projected model counts more efficiently than
state-of-the-art hashing-based techniques that use independent
supports.

It is worth mentioning here that several \emph{bounding model
counters} have been reported earlier in the literature.  These
counters report a count that is at least as large (or, as small, as
the case may be) as the true model count of the given CNF formula with
a given confidence.  Notable examples of such counters are
SampleCount~\cite{GHSS07b}, BPCount~\cite{KSS08}, MBound and
Hybrid-MBound~\cite{GSS06} and MiniCount~\cite{KSS08}.  Owing to
several technical reasons, however, these bounding counters do not
scale as well as state-of-the-art hashing-based counters like
{\ApproxMCFour}~\cite{SGM20} in practice.
Unlike earlier bounding counters, we first carefully identify an upper
bound set, and then use state-of-the-art hashing-based approximate
projected counting, while treating the computed upper bound set as the
new projection set.  Therefore, our approach directly benefits from
significant improvements in performance of hashing-based projected
counting achieved over the years.
Moreover, by carefully controlling the set of variables over which the
upper bound set is computed, we can also control the quality of the
bound.  For example, if the upper bound set is chosen entirely from
within the projection set, then the upper bound set can be shown to be
an independent support.  In this case, the counts reported by our
approach truly have PAC-style guarantees.

\section{Technical Contribution}\label{sec:theory}
In this section, we generalize the notion of independent support, and
give technical details of projected model counting using this
generalization. We start with some definitions.
\begin{definition}\label{def:gis-ubs-lbs}
	Given a CNF formula $\varphi$ and a projection set $\PP$, let $\SSS
        \subseteq \Sup{\varphi}$ be such that for every $\sigma_1,
        \sigma_2 \in \satisfying{\varphi}$, we have
        $\big(\proj{\sigma_1}{\SSS} = \proj{\sigma_2}{\SSS}\big)
        ~\bowtie~ \big(\proj{\sigma_1}{\PP} =
        \proj{\sigma_2}{\PP}\big)$, where $\bowtie ~\in \{\Rightarrow,
        \Leftarrow, \Leftrightarrow\}$. Then $\SSS$ is called a
        \vspace*{-0.1in}
	\begin{enumerate}
		\item \emph{generalized independent support (GIS)} of $\PP$ in
		$\varphi$ if $\bowtie$ is ~$\Leftrightarrow$
		\item \emph{upper bound support (UBS)} of $\PP$ in $\varphi$ if $\bowtie$ is ~$\Rightarrow$
		\item \emph{lower bound support (LBS)} of $\PP$ in $\varphi$ if $\bowtie$ is ~$\Leftarrow$
	\end{enumerate}
\end{definition}

Note that in the above definition, $\SSS$ need not be a subset of
$\PP$.  In fact, if $\SSS$ is restricted to be a subset of $\PP$, the
definitions of GIS and UBS coincide with that of IS
(Definition~\ref{def:is}), while LBS becomes a trivial concept (every
subset of $\PP$ is indeed an LBS of $\PP$ in $\varphi$). The following
lemma now follows immediately.
\begin{lemma}
	Let $\GG$, $\UU$ and $\LL$ be GIS, UBS and LBS, respectively, of $\PP$
	in $\varphi$.  Then $|\ProjectSatisfying{\varphi}{\LL}| \le
	|\ProjectSatisfying{\varphi}{\PP}|$ $=
	|\ProjectSatisfying{\varphi}{\GG}|$ $\le
	|\ProjectSatisfying{\varphi}{\UU}|$.
\end{lemma}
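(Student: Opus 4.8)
The plan is to establish each of the three (in)equalities separately by exploiting the relationship between a support set and the projection it induces on $\satisfying{\varphi}$. The key observation is that for any set $\SSS \subseteq \Sup{\varphi}$, the map $\proj{\sigma}{\PP} \mapsto \proj{\sigma}{\SSS}$ (or its inverse) is well-defined exactly when the corresponding one-sided implication in Definition~\ref{def:gis-ubs-lbs} holds, and a well-defined surjection or injection between the two projected solution sets immediately yields the cardinality comparison.

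First I would handle the GIS case. Since $\GG$ is a GIS, for all $\sigma_1,\sigma_2 \in \satisfying{\varphi}$ we have $\proj{\sigma_1}{\GG} = \proj{\sigma_2}{\GG} \Leftrightarrow \proj{\sigma_1}{\PP} = \proj{\sigma_2}{\PP}$. The forward direction means the assignment $\proj{\sigma}{\GG} \mapsto \proj{\sigma}{\PP}$ is a well-defined function from $\ProjectSatisfying{\varphi}{\GG}$ to $\ProjectSatisfying{\varphi}{\PP}$; it is surjective by construction (every element of $\ProjectSatisfying{\varphi}{\PP}$ arises from some model), and the backward direction makes it injective. Hence it is a bijection and $|\ProjectSatisfying{\varphi}{\GG}| = |\ProjectSatisfying{\varphi}{\PP}|$.

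Next, for the UBS inequality, $\UU$ being a UBS gives $\proj{\sigma_1}{\UU} = \proj{\sigma_2}{\UU} \Rightarrow \proj{\sigma_1}{\PP} = \proj{\sigma_2}{\PP}$. This makes $\proj{\sigma}{\UU} \mapsto \proj{\sigma}{\PP}$ a well-defined surjection from $\ProjectSatisfying{\varphi}{\UU}$ onto $\ProjectSatisfying{\varphi}{\PP}$, so $|\ProjectSatisfying{\varphi}{\PP}| \le |\ProjectSatisfying{\varphi}{\UU}|$. Symmetrically, for the LBS inequality, $\LL$ being an LBS gives $\proj{\sigma_1}{\PP} = \proj{\sigma_2}{\PP} \Rightarrow \proj{\sigma_1}{\LL} = \proj{\sigma_2}{\LL}$, which makes $\proj{\sigma}{\PP} \mapsto \proj{\sigma}{\LL}$ a well-defined surjection from $\ProjectSatisfying{\varphi}{\PP}$ onto $\ProjectSatisfying{\varphi}{\LL}$, yielding $|\ProjectSatisfying{\varphi}{\LL}| \le |\ProjectSatisfying{\varphi}{\PP}|$. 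Chaining the three relations gives the claimed statement.

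There is no real obstacle here; the only point requiring minor care is the ``surjectivity by construction'' argument — namely that every element of the target projected set is the image of some model — and the precise direction of the implication that guarantees well-definedness versus injectivity. I would state these explicitly rather than gloss over them, since swapping the direction of $\bowtie$ flips whether one gets an injection or a surjection. Everything else is immediate from the definitions.
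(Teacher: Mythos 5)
Your proof is correct, and it is essentially the argument the paper has in mind: the lemma is stated there without proof as an immediate consequence of Definition~\ref{def:gis-ubs-lbs}, and your formalization via well-defined surjections/bijections between the projected solution sets is the natural way to spell out that "immediate" step. Your care about which direction of $\bowtie$ gives well-definedness versus injectivity is exactly the right point to make explicit.
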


Let $\mathcal{UBS}, \mathcal{LBS}, \mathcal{GIS}$ and $\mathcal{IS}$
be the set of all UBS, LBS, GIS and IS respectively of a projection
set $\PP$ in $\varphi$.  It follows from the above definitions that
$\mathcal{IS} \subseteq \mathcal{GIS} \subseteq \mathcal{UBS}$, and
$\mathcal{GIS} \subseteq \mathcal{LBS}$. While each of the notions of
GIS, UBS and LBS are of independent interest, this paper focuses
primarily on UBS for varios reasons.  First, we found this notion to
be particularly useful in practical projected model counting.
Additionally, as the above inclusion relations show, $\mathcal{UBS}$
and $\mathcal{LBS}$ are the largest classes among $\mathcal{UBS},
\mathcal{LBS}, \mathcal{GIS}$ and $\mathcal{IS}$; hence, finding an
UBS is likely to be easier than finding a GIS.  Furthermore, the
notion of UBS continues to remain interesting (but not so for LBS, as
mentioned above) even when $\II$ is chosen to be a subset of $\PP$.

We call a UBS $\UU$ (resp. LBS $\LL$, GIS $\GG$ and IS $\II$) of $\PP$
in $\varphi$ \emph{minimal} if there is no other UBS (resp. LBS, GIS
and IS) of $\PP$ in $\varphi$ of size/cardinality strictly less than
$|\UU|$ (resp. $|\LL|$, $|\GG|$ and $|\II|$).  Minimal UBS are useful
for computing good upper bounds of projected model counts, since they
help reduce the size of random XOR clauses used in hashing-based
projected model counting.

In the remainder of this section, we first explore some interesting
theoretical properties of GIS and UBS, and then proceed to develop a
practical algorithm for computing an UBS from a given formula
$\varphi$ and projection set $\PP$.  Finally, we present an algorithm
for computing bounds of projected model counts using the UBS thus computed.
\subsection{Extremal properties of GIS and UBS}
We first show that relaxing the requirement that variables on which to
project models must be chosen from the projection set can lead to an
exponential improvement in the size of the independent/upper-bound support.
\begin{theorem}\label{thm:gis-succinct}
 For every $n > 1$, there exists a propositional formula $\varphi_n$
 on $(n-1) + \lceil\log_2 n\rceil$ variables and a projection set $\PP_n$ 
 with $|\PP_n| = n-1$ such that
 \vspace*{-0.1in}
 \begin{itemize}
 \item The smallest GIS of $\PP_n$ in
   $\varphi_n$ is of size $\lceil\log_2 n\rceil$.
 \item The smallest UBS of $\PP_n$ in
   $\varphi_n$ is of size $\lceil\log_2 n\rceil$.
 \item The smallest IS of $\PP_n$ in $\varphi_n$
   is $\PP_n$ itself, and hence of size $n-1$.
 \end{itemize}
\end{theorem}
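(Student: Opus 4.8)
The plan is to build $\varphi_n$ explicitly as a ``one-hot body with a binary address''. Set $k=\lceil\log_2 n\rceil$, let $\PP_n=\{p_1,\dots,p_{n-1}\}$, and add $k$ fresh \emph{address} variables $Y=\{y_1,\dots,y_k\}$, so that $\Sup{\varphi_n}$ has exactly $(n-1)+k$ variables. For $0\le i\le 2^k-1$ let $\bin(i)$ denote the $k$-bit binary encoding of $i$, and let $\chi_i$ be the conjunction of literals over $Y$ asserting ``$Y=\bin(i)$''. I would then take
\[
\varphi_n \;=\; \bigwedge_{1\le i<j\le n-1}(\neg p_i \vee \neg p_j)\;\wedge\;\bigwedge_{i=1}^{n-1}\bigl(p_i \rightarrow \chi_i\bigr)\;\wedge\;\Bigl(\bigl(\neg p_1 \wedge\cdots\wedge \neg p_{n-1}\bigr)\rightarrow \chi_0\Bigr),
\]
which is readily rewritten as a CNF formula. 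Intuitively, the first conjunct forces at most one $p_i$ to be true, and the remaining conjuncts force $Y$ to spell out the binary index of the unique true $p_i$, or $\bin(0)$ when all $p_i$ are false.

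The first step is to pin down $\satisfying{\varphi_n}$ exactly. I would argue that every model is one of: $\sigma_0$, with all $p_i$ false and $Y=\bin(0)$ (forced by the third conjunct); or, for $1\le i\le n-1$, the assignment $\sigma_i$ with $p_i$ the unique true $p$-variable (forced by the first conjunct) and $Y=\bin(i)$ (forced by the second). Conversely, each of these $n$ assignments clearly satisfies $\varphi_n$. Hence $\card{\satisfying{\varphi_n}}=n$, and both maps $\sigma\mapsto\proj{\sigma}{\PP_n}$ and $\sigma\mapsto\proj{\sigma}{Y}$ are injective on $\satisfying{\varphi_n}$: the $\PP_n$-projections of $\sigma_0,\dots,\sigma_{n-1}$ are pairwise distinct, and so are their $Y$-projections $\bin(0),\dots,\bin(n-1)$, since $\bin$ is injective on $\{0,\dots,2^k-1\}$. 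In particular $\card{\ProjectSatisfying{\varphi_n}{\PP_n}}=n$.

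Given this, the three bullets follow quickly. Since both projections above are injective on $\satisfying{\varphi_n}$, for any two models, agreement on $Y$ is equivalent to being the same model, which is equivalent to agreement on $\PP_n$; hence $Y$ is a GIS --- and therefore a UBS, since $\mathcal{GIS}\subseteq\mathcal{UBS}$ --- of size $\lceil\log_2 n\rceil$. For the matching lower bounds I would invoke the preceding Lemma: any GIS $\GG$ satisfies $\card{\ProjectSatisfying{\varphi_n}{\GG}}=n$, and any UBS $\UU$ satisfies $\card{\ProjectSatisfying{\varphi_n}{\UU}}\ge n$; combined with the trivial bound $\card{\ProjectSatisfying{\varphi_n}{S}}\le 2^{\card{S}}$, this forces $\card{S}\ge\lceil\log_2 n\rceil$ for every GIS and every UBS $S$. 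So the smallest GIS and the smallest UBS both have size exactly $\lceil\log_2 n\rceil$.

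Finally, for the IS bullet: by Definition~\ref{def:is} an IS must be a subset of $\PP_n$, and $\PP_n$ itself is trivially one, so it suffices to rule out every proper subset. For each $i$, the models $\sigma_0$ and $\sigma_i$ agree on $\PP_n\setminus\{p_i\}$ (every variable there is false in both) but disagree on $p_i$, hence on $\PP_n$, so $\PP_n\setminus\{p_i\}$ is not an IS; and since adding variables to an IS keeps it an IS, no proper subset of $\PP_n$ --- each being contained in some $\PP_n\setminus\{p_i\}$ --- can be an IS. Thus $\PP_n$ is the unique, and hence smallest, IS, of size $n-1$, and the claimed exponential gap, $\lceil\log_2 n\rceil$ versus $n-1$, is immediate. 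The one step that needs genuine care is the exact description of $\satisfying{\varphi_n}$: in particular, verifying that ``spurious'' assignments --- e.g.\ all $p_i$ false together with $Y=\bin(i)$ for some $i\ge1$, or $Y=\bin(i)$ for $n\le i<2^k$ --- are excluded by the third conjunct. Once the model set is fixed, the counting and support arguments are routine.
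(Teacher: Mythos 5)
Your proposal is correct and takes essentially the same approach as the paper: the same one-hot-plus-binary-index construction with $n$ models, the same counting lower bound $\card{S}\ge\lceil\log_2 n\rceil$ for any GIS/UBS, and the same $\sigma_0$-versus-$\sigma_i$ argument for the IS bullet. The only (welcome) differences are that you write out an explicit CNF realizing the model set, whereas the paper simply tabulates the $n$ satisfying assignments, and you handle general $n$ rather than assuming $n$ is a power of two.
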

\noindent {\bfseries \emph{Proof:}}
For notational convenience, we assume $n$ to be a power of $2$.
Consider a formula $\varphi_n$ on $(n-1) + \log_2 n$ propositional
variables $x_1, \ldots x_{n-1},$ $y_1, \ldots y_{\log_2 n}$ with $n$
satisfying assignments, say $\sigma_0, \ldots \sigma_{n-1}$, as
shown in the table below.
\begin{wrapfigure}[9]{l}{0.5\textwidth}
  \vspace*{-0.3in}
\begin{center}
  \small
\begin{tabular}{|c||c|c|c|c|c|c|c|c|}
  \hline
  & $x_1$ & $x_2$ & $\cdots$ & $x_{n-1}$ & $y_1$ & $y_2$ & $\cdots$ & $y_{\log_2 n}$ \\
  \hline
  \hline
  $\sigma_0$ & $0$ & $0$ & $\cdots$ & $0$ & $0$ & $\cdots$ & $0$ & $0$ \\
  \hline
  $\sigma_1$ & $1$ & $0$ & $\cdots$ & $0$ & $0$ & $\cdots$ & $0$ & $1$ \\
  \hline
  $\sigma_2$ & $0$ & $1$ & $\cdots$ & $0$ & $0$ & $\cdots$ & $1$ & $0$ \\
  \hline
  & $\vdots$ & $\vdots$ & $\vdots$ & $\vdots$ & $\vdots$ & $\vdots$ & $\vdots$ & $\vdots$ \\
  \hline
  $\sigma_{n-1}$ & $0$ & $0$ & $\cdots$ & $1$ & $1$ & $\cdots$ & $1$ & $1$ \\
  \hline
\end{tabular}
\end{center}
\end{wrapfigure}
\normalsize
Thus, for all $i \in \{1, \ldots n-1\}$, the values of $y_1 \ldots
y_{log_2 n}$ in $\sigma_i$ encode the number $i$ in binary (with
$y_{\log_2 n}$ being the least significant bit, and $y_1$ being the
most significant bit), the value of $x_i$ is $1$, and the values of
all other $x_j$'s are $0$.  For the special satisfying assignment
$\sigma_0$, the values of all variables are $0$.

Let $\PP_n = \{x_1, \ldots x_{n-1}\}$.  Clearly,
$|\satisfying{\varphi_n}| = |\ProjectSatisfying{\varphi_n}{\PP_n}| =
n$.  Now consider the set of variables $\GG_n = \{y_1, \ldots
y_{\log_2 n}\}$.  It is easy to verify that for every pair of
satisfying assignments $\sigma_i, \sigma_j$ of $\varphi_n$,
$\big(\proj{\sigma_i}{\GG_n} = \proj{\sigma_j}{\GG_n}\big)
\Leftrightarrow \big(\proj{\sigma_i}{\PP_n} =
\proj{\sigma_j}{\PP_n}\big)$.  Therefore, $\GG_n$ is a GIS, and hence
also a UBS, of $\PP_n$ in $\varphi_n$, and $|\GG_n| = \log_2 n$.
Indeed, specifying $y_1, \ldots y_{\log_2 n}$ completely specifies the
value of all variables for every satisfying assignment of $\varphi_n$.
Furthermore, since $|\ProjectSatisfying{\varphi_n}{\PP_n}| = n$, every
GIS and also UBS of $\PP_n$ must be of size at least $\log_2 n$.
Hence, $\GG_n$ is a smallest-sized GIS, and also a smallest-sized UBS,
of $\PP_n$ in $\varphi_n$.

Let us now find how small an independent support (IS) of $\PP_n$ in
$\varphi$ can be.  Recall that
$|\ProjectSatisfying{\varphi_n}{\PP_n}| = n$.  If possible, let there
be an IS of $\PP_n$, say $\II_n \subseteq \PP_n$, where $|\II_n| < n-1$.
Therefore, at least one variable in $\PP_n$, say $x_i$, must be absent
in $\II_n$.  Now consider the satisfying assignments $\sigma_i$ and
$\sigma_0$.  Clearly, both $\proj{\sigma_i}{\II_n}$ and
$\proj{\sigma_0}{\II_n}$ are the all-$0$ vector of size $n-1$.
Therefore, $\proj{\sigma_i}{\II_n} = \proj{\sigma_0}{\II_n}$
although $\proj{\sigma_i}{\PP_n} \neq \proj{\sigma_0}{\PP_n}$.  It
follows that $\II_n$ cannot be an IS of $\PP_n$ in $\varphi_n$.  This
implies that the smallest IS of $\PP_n$ in
$\varphi_n$ is $\PP_n$ itself, and has size $n-1$.
\qed

Observe that the smallest GIS/UBS $\GG_n$ in the above proof is
disjoint from $\PP_n$.  This suggests that it can be beneficial to
look outside the projection set when searching for a compact GIS or
UBS.  The next theorem shows that the opposite can also be true.

\begin{restatable}{theorem}{gisIsPs}\label{thm:gis-is-ps}
 For every $n > 1$, there exist propositional formulas $\varphi_n$ and
 $\psi_n$ on $(n -1) + \lceil\log_2 n\rceil$ variables and a
 projection set $\QQ_n$ with $|\QQ_n| = n-
 \lceil \log_2 n \rceil - 2$ such that
 the only GIS of $\QQ_n$ in $\varphi_n$ is $\QQ_n$, and
 the smallest UBS of $\QQ_n$ in $\psi_n$ is also $\QQ_n$.
\end{restatable}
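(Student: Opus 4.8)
The plan is to exhibit, for $m := \lceil\log_2 n\rceil$, explicit formulas on the variable set $V := \{x_1,\ldots,x_{n-1}\}\cup\{y_1,\ldots,y_m\}$ (of size $(n-1)+m$) and to take $\QQ_n := \{x_1,\ldots,x_{n-m-2}\}$, a set of $x$-variables of size $n-m-2$; for the finitely many small $n$ with $n-m-2\le 0$ the set $\QQ_n$ is empty and both claims are immediate, so I assume $n-m-2\ge 1$. Two elementary observations do all the work. (a) If $\sigma,\sigma'\in\satisfying{\varphi}$ differ on exactly one variable $v$ while $\proj{\sigma}{\QQ_n}\ne\proj{\sigma'}{\QQ_n}$, then every UBS (hence every GIS) of $\QQ_n$ in $\varphi$ must contain $v$: the defining implication of a UBS, contraposed, forces $\proj{\sigma}{\SSS}\ne\proj{\sigma'}{\SSS}$, which can only happen if $v\in\SSS$. (b) If $\sigma,\sigma'\in\satisfying{\varphi}$ satisfy $\proj{\sigma}{\QQ_n}=\proj{\sigma'}{\QQ_n}$ but differ on $v$, then no GIS of $\QQ_n$ in $\varphi$ contains $v$: a GIS also satisfies the reverse implication, so $\proj{\sigma}{\SSS}=\proj{\sigma'}{\SSS}$, hence $v\notin\SSS$.

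For the GIS part I would let $\varphi_n$ be any CNF formula whose model set is $\satisfying{\varphi_n}=\{\mathbf 0\}\cup\{\mathbf 1_v : v\in V\}$, where $\mathbf 0$ is the all-zero assignment and $\mathbf 1_v$ assigns $1$ to $v$ and $0$ to every other variable; since each variable is flipped in some model, $\Sup{\varphi_n}=V$, and $\QQ_n$ is trivially a GIS of itself. For each $x_i\in\QQ_n$, the pair $(\mathbf 0,\mathbf 1_{x_i})$ differs only on $x_i$ and disagrees on $\QQ_n$, so (a) puts $x_i$ in every GIS; thus $\QQ_n$ is contained in every GIS. For each $v\in V\setminus\QQ_n$, the pair $(\mathbf 0,\mathbf 1_v)$ agrees on $\QQ_n$ (both are $0$ there) but differs on $v$, so (b) keeps $v$ out of every GIS; thus every GIS is contained in $\QQ_n$. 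Hence $\QQ_n$ is the only GIS. The same pairs $(\mathbf 0,\mathbf 1_{x_i})$ and observation (a) already show that every UBS of $\QQ_n$ in $\varphi_n$ contains $\QQ_n$, so one may even take $\psi_n=\varphi_n$; but a more transparent witness for the UBS part is a CNF formula $\psi_n$ over $V$ with $\satisfying{\psi_n}$ equal to all $2^{\,n-m-2}$ assignments that are arbitrary on $\QQ_n$ and $0$ on $V\setminus\QQ_n$ (the support is kept equal to $V$ by adjoining trivial clauses $x\vee\neg x$ for $x\in\QQ_n$). For this $\psi_n$ we have $|\ProjectSatisfying{\psi_n}{\QQ_n}|=2^{\,n-m-2}$, so any UBS $\SSS$ must satisfy $2^{|\SSS|}\ge 2^{\,n-m-2}$, i.e. $|\SSS|\ge n-m-2=|\QQ_n|$; and because every variable of $V\setminus\QQ_n$ is constant across $\satisfying{\psi_n}$ and therefore contributes nothing, a UBS of size $|\QQ_n|$ must be exactly $\QQ_n$. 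Since $\QQ_n$ is visibly a UBS, it is the smallest one.

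The main obstacle is conceptual rather than computational: observations (a) and (b) are the rigorous form of the statement that, in these formulas, no variable outside $\QQ_n$ can be ``traded'' for a variable of $\QQ_n$, and stating them exactly right against the three-way definition of GIS/UBS/LBS is the only delicate step. Everything else is routine bookkeeping --- confirming that a CNF realizing a prescribed finite model set exists, that $\Sup{\varphi_n}=\Sup{\psi_n}=V$ so the comparison with $|\QQ_n|$ is meaningful, and disposing of the few small $n$ where $n-m-2\le 0$.
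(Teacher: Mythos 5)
Your proof is correct, but it reaches the theorem by a genuinely different construction than the paper's. The paper reuses the formula $\varphi_n$ from Theorem~\ref{thm:gis-succinct} (whose $n$ models pair each unit vector on the $x$'s with the binary encoding of its index on the $y$'s), takes $\QQ_n$ to be the $x$'s with the power-of-two indices removed, and then argues variable-by-variable (Claims 1--6) that every GIS must exclude each $y_i$ and each $x_{2^i}$ while including every member of $\QQ_n$; for the UBS half it builds $\psi_n$ by zeroing out the $y$'s so that they become useless. Your witnesses are simpler: for the GIS half, the model set $\{\mathbf 0\}\cup\{\mathbf 1_v : v\in V\}$ makes both inclusions fall out of two clean observations --- a pair of models differing in a single variable and disagreeing on $\QQ_n$ forces that variable into every UBS/GIS, and a pair agreeing on $\QQ_n$ but differing on $v$ forces $v$ out of every GIS --- which is exactly the contrapositive reasoning the paper performs implicitly in Claims 2--4, stated once and reused. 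For the UBS half you replace the paper's element-wise argument with a cardinality argument ($2^{|\SSS|}\ge|\ProjectSatisfying{\psi_n}{\QQ_n}|$ plus the fact that constant variables contribute nothing to the projection), which is shorter and arguably more robust; your remark that one could even take $\psi_n=\varphi_n$ is also correct, since observation (a) already pins $\QQ_n$ inside every UBS of $\QQ_n$ in your $\varphi_n$. The paper's choice buys continuity with Theorem~\ref{thm:gis-succinct} (the same formula family illustrates both the exponential gap and its absence under a different projection set); your choice buys a more elementary, self-contained argument. The only points worth tightening are cosmetic: the realizability of a prescribed finite model set as a CNF with syntactic support exactly $V$ deserves one explicit sentence (the tautology-padding trick you mention suffices), and the degenerate small-$n$ cases where $n-\lceil\log_2 n\rceil-2\le 0$ are a defect of the theorem statement itself rather than of either proof.
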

\noindent {\bfseries \emph{Proof:}} We assume $n$ to be a power of $2$
and consider the same formula $\varphi_n$ as used in the proof of
Theorem~\ref{thm:gis-succinct}.  We use $\sigma_0, \ldots
\sigma_{n-1}$ to denote the satisfying assignments of $\varphi_n$, as
before.  Additionally, we choose $\psi_n$ to be the formula on the
same set of variables and with the same number ($n$) of satisfying
assignments, say $\widehat{\sigma_0}, \ldots \widehat{\sigma_{n-1}}$,
defined as follows:  for all $i \in \{0, \ldots n-1\}$, 
$\widehat{\sigma_i}(x_j) =
   \sigma_i(x_j)$ for $j \in \{1, \ldots n-1\}$, and
   $\widehat{\sigma_i}(y_j) = 0$
   for $j \in \{1, \ldots \log_2 n\}$. In other words, all $y_j$'s are
   set to $0$ in all satisfying assignments of $\psi_n$.
We also choose $\QQ_n = \{x_1, \ldots x_{n-1}\} \setminus \{x_{2^0},
x_{2^1}, \ldots x_{n/2}\}$.  Thus, $|\QQ_n| = (n-1) - \lceil \log_2
n\rceil - 1$ $= n - \lceil \log_2 n \rceil - 2$.

Let $\GG_n$ be a generalized independent support (GIS) of $\QQ_n$ in
$\varphi_n$. We state and prove below some auxiliary claims that help
in proving the main result.

\noindent \emph{\bfseries Claim 1:} $\forall i \in \{0 \ldots \log_2 n - 1\}$,
~~$\proj{\sigma_0}{\QQ_n} = \proj{\sigma_{2^i}}{\QQ_n}$.\\
Indeed, both $\proj{\sigma_0}{\QQ_n}$
and $\proj{\sigma_{2^i}}{\QQ_n}$ are the all-$0$ vectors of length
$|\QQ_n|$.
 
\noindent \emph{\bfseries Claim 2:} $\forall i \in \{1, \ldots \log_2 n\}$,
~~$y_i \not\in \GG_n$.\\
To see why this is so, let us express $i$ as $\log_2 n - j$, where $j
\in \{0, \ldots \log_2 n - 1\}$.  Then, it is easy to see that $y_i =
0$ in $\sigma_0$ and $y_i = 1$ in $\sigma_{2^j}$.  However, from Claim
1, we know that $\proj{\sigma_0}{\QQ_n} =
\proj{\sigma_{2^j}}{\QQ_n}$. Since $\GG_n$ is a GIS of $\QQ_n$, by properties of GIS, we cannot have $y_i \in \GG_n$.

\noindent \emph{\bfseries Claim 3:} $\forall i \in \{0, \ldots \log_2 n -1\}$,
~~$x_{2^i} \not\in \GG_n$.\\
By a similar reasoning as above, we see that $x_{2^i} = 0$ in $\sigma_0$
and $x_{2^i} = 1$ in $\sigma_{2^i}$. Therefore, since $\GG_n$ is a GIS of $\QQ_n$, from Claim 1, we cannot have $x_{2^i} \in \GG_n$.

\noindent \emph{\bfseries Claim 4:} $\forall x_j \in \QQ_n$, ~~$x_j \in
\GG_n$.\\
To see why this is so, suppose $x_j \not\in \GG_n$, and let $\QQ_n' =
\QQ_n \setminus \{x_j\}$. From Claims 2 and 3 above, we must then have
$\GG_n \subseteq \QQ_n'$.  Hence, $\QQ_n'$ must be an IS of $\QQ_n$ in
$\varphi_n$.  However, note that $\proj{\sigma_0}{\QQ_n'} =
\proj{\sigma_j}{\QQ_n'}$ although $\proj{\sigma_0}{\QQ_n} \neq
\proj{\sigma_j}{\QQ_n}$. Therefore, $\QQ_n'$ cannot be an IS of
$\QQ_n$ in $\varphi_n$ -- a contradiction! Therefore, $x_j$ must be in $\GG_n$.

\noindent Putting Claims 1 through 4 together, $\QQ_n$ is the only
GIS of $\QQ_n$ in $\varphi_n$.

Next, we show that the smallest UBS of $\QQ_n$ in $\psi_n$ is $\QQ_n$.
Let $\UU_n$ be an UBS of $\QQ_n$ in $\psi_n$ such that
there is no smaller (cardinality-wise) UBS of $\QQ_n$ in $\psi_n$.

\noindent \emph{\bfseries Claim 5:} $\forall i \in \{1, \ldots \log_2 n\}$, ~~$y_i \not\in \UU_n$.\\
This follows from the observation that $y_i = 0$ in every
$\widehat{\sigma_j}$ for $j \in \{0 \ldots n-1\}$.  Thus, if $\UU_n$
is a UBS of $\QQ_n$ in $\psi_n$ and if $y_i \in \UU_n$, then
$\UU_n \setminus \{y_i\}$ must also be a UBS of $\QQ_n$ in $\psi_n$.
This contradicts the fact that there is no other UBS of cardinality
less than $|\UU_n|$.

\noindent \emph{\bfseries Claim 6:} $\forall x_j \in \QQ_n$, ~~$x_j \in
\UU_n$.\\
To see why this is so, suppose $x_j \not\in \UU_n$ and let $\UU_n' =
\{x_1, \ldots x_{n-1}\} \setminus \{x_j\}$.  From Claim 5 above, we
must then have $\UU_n \subseteq \UU_n'$.  However, note that
$\proj{\widehat{\sigma_0}}{\UU_n'} =
\proj{\widehat{\sigma_j}}{\UU_n'}$ although
$\proj{\widehat{\sigma_0}}{\QQ_n} \neq
\proj{\widehat{\sigma_j}}{\QQ_n}$.  Therefore, $\UU_n'$ cannot be a
UBS of $\QQ_n$ in $\psi_n$.  Since $\UU_n \subseteq \UU_n'$, it
follows that $\UU_n$ cannot be a UBS of
$\QQ_n$ in $\psi_n$ either.

\noindent Putting Claims 5 and 6 together, the smallest UBS
of $\QQ_n$ in $\psi_n$ is $\QQ_n$ itself.
 \qed

Theorems~\ref{thm:gis-succinct} and \ref{thm:gis-is-ps} show that when
searching for a compact GIS or UBS, we may have to consider all
possibilities, ranging from staying within the projection set to being
completely disjoint from the projection set.

\subsection{An algorithm to compute UBS}
We now describe an algorithm to compute a UBS for a given CNF formula
$\varphi$ and projection set $\PP$.  We draw our motivation from
Padoa's theorem~\cite{P01}, which provides a necessary and sufficient
condition to determine when a variable in the support of a formula
$\varphi$ is functionally determined by the other variables in the
support.  Without loss of generality, let $\Sup{\varphi} = X = \{x_1,
x_2, \ldots x_t\}$.  We create another set of {\em fresh} variables
${X'} = \{x_1', x_2', \ldots x_t'\}$. Let $\varphi(X \mapsto {X'})$
represent the formula where every $x_i \in X$ in $\varphi$ is replaced
by $x_i' \in {X'}$.  In the following, we write $\varphi(X)$ to
clarify that the support of $\varphi$ is $X$.

\begin{lemma}[Padoa's Theorem~\cite{P01}]
  Let $\psi(X,{X'},i)$ be defined as $\varphi(X) \wedge
  \varphi(X \mapsto {X'}) \wedge \bigwedge_{j=1\atop {j\neq
      i}}^{t} (x_j \Leftrightarrow x_j') \wedge x_i \wedge \neg x_i'$.
  The variable $x_i$ is defined by $X \setminus \{x_i\}$ in the
    formula $\varphi$ iff $\psi(X,{X'},i)$ is unsatisfiable.
\end{lemma}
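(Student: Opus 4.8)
The plan is to prove the lemma via its contrapositive in both directions: I will show that $x_i$ fails to be defined by $X \setminus \{x_i\}$ in $\varphi$ precisely when $\psi(X, X', i)$ is satisfiable. The first step is to pin down the semantic content of ``$x_i$ is defined by $X \setminus \{x_i\}$'': this holds iff there is a Boolean function $f$ over $X \setminus \{x_i\}$ with $\sigma(x_i) = f\big(\proj{\sigma}{X \setminus \{x_i\}}\big)$ for every model $\sigma$ of $\varphi$, which is in turn equivalent to the statement that no two models $\sigma_1, \sigma_2$ of $\varphi$ agree on all of $X \setminus \{x_i\}$ while disagreeing on $x_i$. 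The ``only if'' direction of this reformulation is immediate; for the ``if'' direction, the desired $f$ is well-defined on projections of models exactly because of the ``no two disagreeing models'' condition, and can be extended arbitrarily elsewhere, and over finitely many Boolean inputs every such function is propositionally expressible, so the abstract and syntactic notions of ``defined'' coincide.

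Next I would handle the ``$\Leftarrow$'' direction of the contrapositive: if $\psi(X, X', i)$ has a satisfying assignment $\mu$ over $X \cup X'$, set $\sigma_1(x_j) := \mu(x_j)$ and $\sigma_2(x_j) := \mu(x_j')$ for all $j \in \{1, \ldots, t\}$. The conjunct $\varphi(X)$ forces $\sigma_1 \models \varphi$; the conjunct $\varphi(X \mapsto X')$ forces $\sigma_2 \models \varphi$; the conjuncts $x_j \Leftrightarrow x_j'$ for $j \neq i$ force $\sigma_1$ and $\sigma_2$ to agree on $X \setminus \{x_i\}$; and $x_i \wedge \neg x_i'$ forces $\sigma_1(x_i) = 1 \neq 0 = \sigma_2(x_i)$. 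Hence $x_i$ is not defined by $X \setminus \{x_i\}$ in $\varphi$.

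For the reverse direction, suppose $x_i$ is not defined; by the reformulation above there are models $\sigma_1, \sigma_2$ of $\varphi$ agreeing on $X \setminus \{x_i\}$ with $\sigma_1(x_i) \neq \sigma_2(x_i)$, and after possibly swapping the two we may assume $\sigma_1(x_i) = 1$ and $\sigma_2(x_i) = 0$. Define an assignment $\mu$ on $X \cup X'$ by $\mu(x_j) := \sigma_1(x_j)$ and $\mu(x_j') := \sigma_2(x_j)$; checking each conjunct exactly as in the previous paragraph shows $\mu \models \psi(X, X', i)$, so $\psi$ is satisfiable. Combining the two directions yields ``$x_i$ defined $\iff$ $\psi(X,X',i)$ unsatisfiable''.

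I do not anticipate a genuine obstacle: the argument is essentially a back-and-forth translation between a pair of models of $\varphi$ living on disjoint variable copies and a single model of $\psi$. The only point needing a little care is the preliminary reduction of ``definability'' to the ``no two disagreeing models'' formulation, together with the standard observation that a Boolean function on finitely many variables is always realizable by a propositional formula, which is what makes the two readings of ``defined'' interchangeable.
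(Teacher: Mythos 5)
The paper states this lemma as a classical result cited to Padoa and gives no proof of its own, so there is nothing to compare against; your argument is the standard (and correct) proof of propositional Padoa/Beth definability. You correctly reduce ``$x_i$ is defined by $X \setminus \{x_i\}$'' to the absence of two models agreeing on $X \setminus \{x_i\}$ while disagreeing on $x_i$ (with the right observation that any Boolean function on finitely many variables is propositionally expressible, so implicit and explicit definability coincide here), and the two translations between a satisfying assignment of $\psi$ over $X \cup X'$ and a pair of models of $\varphi$ are exactly right.
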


Padoa's theorem has been effectively used in state-of-the-art
hashing-based projected model counters such as
{\ApproxMCFour}~\cite{SGM20} to determine small independent supports
of given projection sets.  In our setting, we need to modify the
formulation since we seek to compute an upper bound support.

Given the projection set $\PP$, we first partition the support $X$ of
$\varphi$ into three sets as follows:
\begin{itemize}
\item $J$: the set of variables already determined to be in a minimal
  UBS of $\PP$ in $\varphi$
\item $Q$: the set of variables for which we do not know yet if they
  are in a minimal UBS of $\PP$ in $\varphi$
\item $D$: the set of variables that need not appear in a minimal UBS
  of $\PP$ in $\varphi$ obtained by adding elements to $J$
\end{itemize}
Initially, $J$ and $D$ are empty sets, and $Q = X$.  As the process of
computation of a minimal UBS proceeds, we wish to maintain the
invariant that $J \cup Q$ is a UBS (not necessarily minimal) of $\PP$
in $\varphi$.  Notice that this is true initially, since $X$ is
certainly a UBS of $\PP$ in $\varphi$.

Let $z$ be a variable in $Q$ for which we wish to determine if it
needs to be added to the partially computed minimal UBS $J$.  In the
following discussion, we use the notation $\varphi(J, Q\setminus
\{z\}, D, z)$ to denote $\varphi$ with its partition of variables, and
with $z$ specially identified in the partition $Q$.  Recalling the
definition of UBS from Section~\ref{sec:prelim}, we observe that if
$z$ is not part of a minimal UBS obtained by augmenting $J$, and if $J
\cup Q$ is indeed a UBS of $\PP \in \varphi$, then as long as values
other than $z$ in $J \cup Q$ are kept unchanged, the projection of a
satisfying assignment of $\varphi$ on $\PP$ must also stay unchanged.
This suggests the following check to determine if $z$ is not part of a
minimal UBS obtained by augmenting $J$.

Define $\xi(J, Q\setminus \{z\}, D, z, D', z')$ as
$\varphi(J, Q\setminus \{z\}, D, z) \wedge \varphi(J, Q\setminus
\{z\}, D', z') \wedge \bigvee_{x_i ~\in~ \mathcal{P} \cap (D \cup
  \{z\})} (x_i \not\Leftrightarrow x_i')$, where $D'$ and $z'$
represent fresh and renamed instances of variables in $D$ and $z$,
respectively.  If $\xi$ is unsatisfiable, we know that as long as the
values of variables in $J \cup (Q \setminus \{z\})$ are kept unchanged, the
projection of the satisfying assginment of $\varphi$ on $\PP$ cannot
change.  This allows us to move $z$ from the set $Q$ to the set $D$.

\begin{restatable}{theorem}{ubsCorrectness}\label{thm:check-correctness}
  If $\xi(J, Q\setminus \{z\}, D, z, D', z')$ is unsatisfiable, then
  $J \cup (Q \setminus \{z\})$ is a UBS of $\PP$ in $\varphi$.
\end{restatable}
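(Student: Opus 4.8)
The plan is to verify directly that $\SSS := J \cup (Q \setminus \{z\})$ meets the defining property of a UBS from Definition~\ref{def:gis-ubs-lbs}: any two models of $\varphi$ that agree on $\SSS$ must agree on $\PP$. The structural fact that makes everything work is that $J$, $Q$, $D$ partition $X = \Sup{\varphi}$ at every step of the algorithm, so $X = \SSS \cup (D \cup \{z\})$ with the two parts disjoint, and hence $\PP = (\PP \cap \SSS) \cup \big(\PP \cap (D \cup \{z\})\big)$, again disjointly.

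First I would fix two models $\sigma_1, \sigma_2 \models \varphi$ with $\proj{\sigma_1}{\SSS} = \proj{\sigma_2}{\SSS}$. They automatically agree on $\PP \cap \SSS$, so it remains to show $\proj{\sigma_1}{\PP \cap (D \cup \{z\})} = \proj{\sigma_2}{\PP \cap (D \cup \{z\})}$. Suppose, toward a contradiction, that this fails: there is a variable $x_i \in \PP \cap (D \cup \{z\})$ with $\sigma_1(x_i) \neq \sigma_2(x_i)$.

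The heart of the argument is to convert this into a satisfying assignment of $\xi(J, Q\setminus\{z\}, D, z, D', z')$, contradicting the hypothesis. Recall that $\xi$ is the conjunction of (i) a copy $\varphi(J, Q\setminus\{z\}, D, z)$ over the variables $X = \SSS \cup D \cup \{z\}$, (ii) a second copy $\varphi(J, Q\setminus\{z\}, D', z')$ in which only the variables of $D \cup \{z\}$ are renamed to fresh symbols $D' \cup \{z'\}$ — so the two copies share exactly the variables of $\SSS$ — and (iii) the disjunction $\bigvee_{x_i \in \PP \cap (D \cup \{z\})} (x_i \not\Leftrightarrow x_i')$. I would then define an assignment $\tau$ that sets every variable of $X$ to its $\sigma_1$-value and sets each primed variable $x_j' \in D' \cup \{z'\}$ to $\sigma_2(x_j)$. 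Conjunct (i) holds because $\sigma_1 \models \varphi$; conjunct (ii) holds because under $\tau$ the shared variables $\SSS$ carry the common values $\proj{\sigma_1}{\SSS} = \proj{\sigma_2}{\SSS}$ while the renamed variables carry the $\sigma_2$-values, so this copy evaluates exactly as $\varphi$ does under $\sigma_2$, which is a model; and conjunct (iii) holds since $\tau(x_i) = \sigma_1(x_i) \neq \sigma_2(x_i) = \tau(x_i')$ for the chosen $x_i$. Hence $\xi$ is satisfiable, contradicting the hypothesis. Therefore $\sigma_1$ and $\sigma_2$ agree on all of $\PP$, so $\SSS$ is a UBS of $\PP$ in $\varphi$.

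The only place subtlety hides is the bookkeeping of which variables are shared versus renamed between the two copies of $\varphi$ inside $\xi$, together with the observation that the second copy is satisfied by $\tau$ \emph{precisely because} $\sigma_1$ and $\sigma_2$ were assumed to agree on $\SSS$ — if they disagreed anywhere on $\SSS$, the construction would fail, which is exactly why $\SSS$ (and not a smaller set) appears in the conclusion. Everything else is routine. Note also that only the unsatisfiability of $\xi$ is used, matching the one-directional statement of the theorem; the reverse implication is not needed here since the algorithm only ever uses an ``unsat'' answer to move $z$ from $Q$ to $D$ while preserving the invariant that $J \cup Q$ is a UBS.
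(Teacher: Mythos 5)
Your proof is correct and is essentially the semantic (contrapositive) rendering of the paper's own argument: both hinge on the twin observations that the two copies of $\varphi$ inside $\xi$ share exactly the variables of $J \cup (Q \setminus \{z\})$ while renaming only $D \cup \{z\}$, and that $\PP \setminus (J \cup (Q\setminus\{z\})) = \PP \cap (D \cup \{z\})$ because $J$, $Q$, $D$ partition $\Sup{\varphi}$. The paper derives the result by writing the UBS condition as validity of an implication over two fully renamed copies of $\varphi$ and simplifying away the equated variables, whereas you unfold unsatisfiability of $\xi$ directly into the construction of a satisfying assignment from a hypothetical pair of models agreeing on $J \cup (Q\setminus\{z\})$ but not on $\PP$ --- the same argument in model-theoretic clothing.
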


\noindent {\bfseries \emph{Proof:}} Without loss of generality, assume
that there are no primed variables in $J, Q$ and $D$.  Let $J' =
\{x_i' \mid x_i \in J\}$, and similarly for $Q'$ and $D'$.  From the
definition of UBS in Section~\ref{sec:theory}, we know that $J \cup Q
\setminus \{z\}$ is a UBS of $\PP$ in $\varphi$ if the following
formula is valid:
\[
 \Big(\varphi(J, Q\setminus z, D, z) \wedge
 \varphi(J', Q'\setminus z', D', z') \wedge
 \bigwedge_{x_i \in (J \cup Q\setminus z)}(x_i \Leftrightarrow x_i')\Big)
 \Rightarrow \bigwedge_{x_j \in \PP}(x_j \Leftrightarrow x_j')
\]

Note that $x_i \Leftrightarrow x_i'$ is already asserted in the antecedent
of the implication for all $x_i \in (J \cup Q \setminus z)$.
Furthermore, variables in $J' \cup Q'\setminus z'$ do not appear in the
consequent of the above implication.  Since the support
of $\varphi$ is partitioned into $J, Q \setminus z, D$ and $\{z\}$, we
also have $\PP \setminus (J \cup Q\setminus z) ~=~ \PP \cap
(D \cup \{z\})$.  Putting these facts together, the condition for
$J \cup Q \setminus \{z\}$ to be a UBS reduces to validity of
\[
 \Big(\varphi(J, Q\setminus z, D, z)  \wedge 
 \varphi(J, Q\setminus z, D', z') \Big)
 \Rightarrow \bigwedge_{x_j \in \PP \cap (D \cup \{z\})}(x_j \Leftrightarrow x_j')
\]

In other words, $J \cup Q\setminus \{z\}$ is a UBS of $\PP$ in $\varphi$
if $\varphi(J, Q\setminus z, D, z) \wedge \varphi(J, Q\setminus z, D', z')
 \wedge \bigvee_{x_j \in \PP \cap (D \cup \{z\})}(x_j \not\Leftrightarrow x_j')$
 is unsatisfiable.

\qed

The above check suggests a simple algorithm for computing a minimal
UBS.  We present the pseudocode of our algorithm for computing UBS
below.
\begin{algorithm}[h]
\small
\begin{algorithmic}[1]
\State $J \gets \emptyset; Q \gets \Sup{\varphi}; D \gets \emptyset$;
\Repeat
 \State $z \gets \ChooseNextVar(Q)$;
 \State $\xi \gets \left(\begin{array}{c} \varphi(J, Q\setminus z, D, z) ~\wedge~ \varphi(J, Q\setminus z, D', z') ~\wedge \\\bigvee_{x_i ~\in~ \PP \cap (D \cup \{z\})} \neg (x_i \Leftrightarrow x_i')   \end{array}\right)$;
 \If {$\xi$ is  \UNSAT}
     \State $D \gets D \cup \{z\}$;
 \Else
     \State $J \gets J \cup \{z\}$;
 \EndIf
 \State $Q \gets Q \setminus \{z\}$;
\Until{$Q$ is $\emptyset$};
\State \Return $J$;
\end{algorithmic}

\caption{{\findubs}$(\varphi, \PP)$}
\label{alg:findubs}
\end{algorithm}

After initializing $J$, $Q$ and $D$, {\findubs} chooses a variable $z
\in Q$ and checks if the formula $\xi$ in
Theorem~\ref{thm:check-correctness} is unsatisfiable.  If so, it adds
$z$ to $D$ and removes it from $Q$.  Otherwise, it adds $z$ to $J$.
The algorithm terminates when $Q$ becomes empty.  On termination, $J$
gives a minimal UBS of $\PP$ in $\varphi$.  The strategy for choosing
the next $z$ from $Q$, implemented by sub-routine $\ChooseNextVar$,
clearly affects the quality of UBS obtained from this algorithm.  We
require that $\ChooseNextVar(Q)$ return a variable from $Q$ as long as
$Q \neq \emptyset$.  Choosing $z$ from within $\PP$ gives a UBS that
is the same as an IS of $\PP$ in $\varphi$.  In our experiments, we
therefore bias the choice of $z$ to favour those not in $\PP$ when
selecting variables from $Q$.

We now state some key properties of Algorithm {\findubs}. 

\begin{restatable}{lemma}{findubsInvariant}\label{lem:findubs-invariant}
The following invariant holds at the loop head (line 2) of 
Algorithm~\ref{alg:findubs}:
There exists a minimal UBS $\UU^*$ of $\PP$ in $\varphi$ such that
$J \subseteq \UU^* \subseteq J \cup Q$.
\end{restatable}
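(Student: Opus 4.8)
The plan is to prove the invariant by induction on the number of loop iterations, with the extra technical strengthening that at the loop head we also know $J \cap Q = \emptyset$, $J \cup Q$ is a UBS of $\PP$ in $\varphi$, and $D$ contains only variables that need not be added. Wait — the cleanest formulation is: at every loop head there is a minimal UBS $\UU^*$ with $J \subseteq \UU^* \subseteq J \cup Q$. Let me think about what I'd actually write.

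First I would establish the base case. Initially $J = \emptyset$, $Q = \Sup{\varphi}$, $D = \emptyset$. Since $\Sup{\varphi}$ is trivially a UBS of $\PP$ in $\varphi$ (if two models agree on all variables they agree on $\PP$), there is at least one UBS contained in $J \cup Q = \Sup{\varphi}$, hence a minimal one, say $\UU^*$; and $J = \emptyset \subseteq \UU^* \subseteq \Sup{\varphi} = J \cup Q$. So the invariant holds before the first iteration.

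For the inductive step, assume the invariant holds at the loop head with witness $\UU^*$, so $J \subseteq \UU^* \subseteq J \cup Q$, and suppose $Q \neq \emptyset$. Let $z = \ChooseNextVar(Q) \in Q$. I split on the branch taken. \emph{Case 1: $\xi$ is \UNSAT.} Then the algorithm sets $D \gets D \cup \{z\}$, $Q \gets Q \setminus \{z\}$, leaving $J$ unchanged; I must produce a (possibly new) minimal UBS $\UU^{**}$ with $J \subseteq \UU^{**} \subseteq J \cup (Q \setminus \{z\})$. By Theorem~\ref{thm:check-correctness}, $J \cup (Q \setminus \{z\})$ is a UBS of $\PP$ in $\varphi$; hence some minimal UBS $\UU^{**}$ satisfies $\UU^{**} \subseteq J \cup (Q \setminus \{z\})$. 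The work is to show such a $\UU^{**}$ can be chosen to still contain $J$; I would argue this using the inductive hypothesis $J \subseteq \UU^*$ together with a "swap" argument — if a minimal UBS inside $J \cup (Q\setminus\{z\})$ omitted some $x \in J$, I'd show $x \in \UU^*$ can be traded in while keeping the UBS property and minimality. Actually the simplest route: $J$ is contained in \emph{every} minimal UBS witnessed so far only if we maintain that extra invariant, so I would strengthen the statement and prove simultaneously that every variable in $J$ is "essential" in the sense that $J \cup Q$ ceases to be a UBS if that variable is removed — but that's false in general. The genuinely safe approach is the exchange argument. \emph{Case 2: $\xi$ is satisfiable.} Then $J \gets J \cup \{z\}$, $Q \gets Q \setminus \{z\}$. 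Here I need a minimal UBS $\UU^{**}$ with $J \cup \{z\} \subseteq \UU^{**} \subseteq J \cup Q$. I claim $\UU^*$ itself works, i.e. $z \in \UU^*$: if $z \notin \UU^*$, then $\UU^* \subseteq J \cup (Q \setminus \{z\})$, and by the contrapositive of Theorem~\ref{thm:check-correctness}'s reasoning — $\xi$ satisfiable means $J \cup (Q \setminus \{z\})$ fails the validity condition derived in that proof, hence is \emph{not} a UBS — no subset of it is a UBS either; in particular $\UU^*$ would not be a UBS, contradiction. Hence $z \in \UU^*$, and $\UU^{**} := \UU^*$ satisfies $J \cup \{z\} \subseteq \UU^* \subseteq J \cup Q = (J\cup\{z\}) \cup (Q\setminus\{z\})$. (I'm implicitly using that a subset of a non-UBS need not be a UBS — more precisely, the antecedent of the implication in Theorem~\ref{thm:check-correctness} only gets \emph{weaker} as we shrink the equated set, so failing validity for $J \cup (Q\setminus z)$ does not directly give failure for $\UU^*$; I'd instead argue: if $\UU^*$ were a UBS with $\UU^* \subseteq J \cup (Q\setminus z)$, then since every superset of a UBS within $\Sup\varphi$ is a UBS, $J \cup (Q\setminus z)$ would be a UBS, so $\xi$ would be \UNSAT by Theorem~\ref{thm:check-correctness} — contradiction. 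This monotonicity-of-UBS-under-superset fact should be stated as an easy preliminary observation.)

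The main obstacle is Case 1 — showing the updated minimal UBS can be taken to still contain $J$. The clean fix, which I would adopt, is to strengthen the invariant to also assert that $J \cup Q$ is a UBS \emph{and} that $J$ is contained in every minimal UBS $\UU$ satisfying $\UU \subseteq J \cup Q$; then in Case 1 the new minimal UBS inside $J \cup (Q\setminus z) \subseteq J \cup Q$ automatically contains $J$, and I separately re-establish this strengthened clause after the update using the fact that shrinking $Q$ only removes candidate UBSs. Verifying that this strengthened clause is preserved in both branches is the real content and will require the superset-closure observation plus careful bookkeeping; everything else is routine.
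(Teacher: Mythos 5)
Your base case and your treatment of the satisfiable branch essentially coincide with the paper's argument. In the SAT case the paper extracts two models of $\varphi$ directly from a satisfying assignment of $\xi$ to conclude that $J\cup(Q\setminus\{z\})$ is not a UBS, and then uses the same superset-closure observation you make to force $z\in\UU^*$; your detour through the converse of Theorem~\ref{thm:check-correctness} (``UBS $\Rightarrow$ $\xi$ \UNSAT'') is acceptable only because the reduction in that theorem's proof is in fact an equivalence, so the direct route via the witness of $\xi$ is cleaner, but both arrive at the same place.

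The genuine gap is the one you flag yourself: the \UNSAT{} branch is never actually proved. There you must exhibit a \emph{minimal} UBS $\UU^{**}$ with $J\subseteq\UU^{**}\subseteq J\cup(Q\setminus\{z\})$, and Theorem~\ref{thm:check-correctness} only tells you that $J\cup(Q\setminus\{z\})$ is \emph{a} UBS; neither your ``exchange/swap'' argument nor your ``strengthened invariant'' is carried out, and you explicitly defer ``the real content'' to later bookkeeping. Since this case is where all the difficulty of Lemma~\ref{lem:findubs-invariant} resides, what you have is a plan rather than a proof. For comparison, the paper closes this case by contradiction: if no minimal UBS sandwiched between $J$ and $J\cup(Q\setminus\{z\})$ existed, then by the inductive hypothesis $z$ would belong to \emph{every} minimal UBS between $J$ and $J\cup Q$, and the paper argues that this would make $\xi$ satisfiable, contradicting the case assumption. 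Note also a subtlety your sketch glosses over: the paper defines ``minimal'' by minimum cardinality, under which the step ``$J\cup(Q\setminus\{z\})$ is a UBS, hence contains a minimal UBS'' is not automatic (a minimum-cardinality UBS need not live inside an arbitrary UBS). Your strengthened-invariant route --- additionally maintain that $J\cup Q$ is a UBS and that every minimal UBS contained in $J\cup Q$ contains $J$ --- does go through if ``minimal'' is read as inclusion-minimal (the SAT branch shows every UBS inside $J\cup Q$ contains $z$, and the \UNSAT{} branch only shrinks the candidate family), but you would need to write out that preservation argument explicitly, and reconcile it with the paper's cardinality-based definition, for the proof to stand.
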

\begin{proof}
From the initialization of $J$ and $Q$ in line 1, we know that when
Algorithm~\ref{alg:findubs} reaches line 2 for the first time, $Q
= \Sup{\varphi}$ and $J = \emptyset$.  Since every minimal UBS of
$\PP$ in $\varphi$ must be contained in $\Sup{\varphi}$ and must
contain $\emptyset$, it follows that the invariant is satisfied when
the algorithm reaches line 2 for the first time.

Let us now inductively hypothesize that the invariant holds at the
start of $k^{th}$ iteration of the repeat-until loop.  There are two
cases to consider in the inductive step.
\begin{itemize}
	\item If the formula $\xi$ (see line 4 of Algorithm~\ref{alg:findubs})
	is $\UNSAT$, then $J$ stays unchanged in the $k^{th}$ iteration.
	Furthermore, from Theorem~\ref{thm:check-correctness}, we know that
	$J \cup (Q\setminus \{z\})$ is an UBS of $\PP$ in $\varphi$.  This
	implies there is at least one minimal UBS that is a subset of $J (\cup
	Q\setminus \{z\})$.  Since $z$ is removed from $Q$ in line 9, our
	proof would be complete if there exists a minimal UBS $\UU^*$ such
	that $J \subseteq \UU^* \subseteq (J \cup Q \setminus \{z\})$.
	Suppose, if possible, there is no such minimal UBS $\UU^*$.  From the
	inductive hypothesis, we also know that there exists at least one
	minimal UBS, say $\widehat{\UU^*}$, such that
	$J \subseteq \widehat{\UU^*} \subseteq (J \cup Q)$. The above
	assertions imply that $z$ must be present in \emph{every} minimal UBS
	$\widehat{\UU^*}$ that satisfies
	$J \subseteq \widehat{\UU^*} \subseteq (J \cup Q)$.  This, in turn,
	implies that $\varphi(J, Q\setminus z, D, z) \wedge \varphi(J,
	Q\setminus z, D', z') \not\Rightarrow \bigwedge_{x_i ~\in~ \PP \cap
		(D \cup \{z\})}(x_i \Leftrightarrow x_i')$.  This contradicts the
	assumption that the formula $\xi$ is $\UNSAT$.  Hence, our assumption
	must be incorrect, and there exists a minimal UBS $\UU^*$ such that
	$J \subseteq \UU^* \subseteq (J \cup Q \setminus \{z\})$.
	
	Therefore, the invariant holds at the loop head at the start of the
	$k+1^{st}$ iteration of the loop.
	
	\item Let $\sigma$ be a satisfying assignment of $\xi$. In this case,
	the variable $z$ is removed from $Q$ and added to $J$ in lines 8 and
	9; hence $J \cup Q$ stays unchanged.
	
	The assignment of values to variables in $\sigma$ yields two
	satisfying assignments of $\varphi$ that agree on the values of
	variables in $J \cup (Q \setminus {z})$, and yet the projections of
	the satisfying assignments on $\PP$ are different.  Therefore, $J \cup
	(Q \setminus \{z\})$ is not a UBS of $\PP$ in $\varphi$.  Since every
	superset of a UBS is also a UBS (easy consequence of the definition of
	UBS), this implies that there is no minimal UBS that is contained in
	$J \cup (Q \setminus \{z\})$.  However, we know from the inductive
	hypothesis that there is at least one minimal UBS, say $\UU^*$, such
	that $J \subseteq \UU^* \subseteq J \cup Q$.  The above two assertions
	imply that $z \in \UU^*$.  Hence,
	$(J \cup \{z\}) \subseteq \UU^* \subseteq (J \cup \{z\}) \cup
	(Q \setminus \{z\})$.
	
	Once again, the invariant holds at the loop head at the start of the
	$k+1^{st}$ iteration of the loop.
\end{itemize}

\end{proof}

\begin{restatable}{theorem}{findubsGivesMinimalUBS}\label{thm:findubs-minimal}
Algorithm~\ref{alg:findubs}, when invoked on $\varphi$ and $\PP$,
terminates and computes a minimal UBS of $\PP$ in $\varphi$.
\end{restatable}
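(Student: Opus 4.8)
The plan is to obtain this theorem as a short corollary of the loop invariant proved in Lemma~\ref{lem:findubs-invariant}, combined with an elementary termination argument; essentially all of the substantive work has already been done in that lemma.

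First I would argue termination. Every pass through the repeat-until loop executes line~9, which removes one element from $Q$, and no line of Algorithm~\ref{alg:findubs} ever adds an element to $Q$. Since $Q$ is initialized on line~1 to the finite set $\Sup{\varphi}$, the cardinality $|Q|$ strictly decreases with each iteration, so after exactly $|\Sup{\varphi}|$ iterations $Q$ becomes empty, the until-condition on line~10 is satisfied, and the algorithm returns $J$ on line~11. Hence the algorithm always terminates.

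Next I would establish that the returned value $J$ is a minimal UBS of $\PP$ in $\varphi$. By Lemma~\ref{lem:findubs-invariant}, at every visit to the loop head (line~2) there is a minimal UBS $\UU^*$ of $\PP$ in $\varphi$ with $J \subseteq \UU^* \subseteq J \cup Q$. The until-test is evaluated at the loop head, and the algorithm returns $J$ precisely at the visit to line~2 at which $Q = \emptyset$; applying the invariant at that point yields a minimal UBS $\UU^*$ with $J \subseteq \UU^* \subseteq J \cup \emptyset = J$, forcing $\UU^* = J$. Therefore $J$ is itself a minimal UBS of $\PP$ in $\varphi$, which is exactly the claim.

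I do not expect any real obstacle here: the only thing one must be slightly careful about is that Lemma~\ref{lem:findubs-invariant} is stated for \emph{every} evaluation of the loop head, including the terminal one at which $Q=\emptyset$, so no extra argument is needed to invoke it after the loop exits. The genuine difficulty — showing that removing a variable $z$ never destroys the existence of a minimal UBS sandwiched between $J$ and $J\cup Q$, via the unsatisfiability check of Theorem~\ref{thm:check-correctness} — lives entirely inside the proof of that lemma, which we are assuming.
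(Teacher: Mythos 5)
Your proposal is correct and follows essentially the same route as the paper's own proof: the identical termination argument (one element removed from $Q$ per iteration, none added, so the loop runs exactly $|\Sup{\varphi}|$ times), followed by instantiating the invariant of Lemma~\ref{lem:findubs-invariant} at the terminal loop-head visit where $Q=\emptyset$ to force $\UU^* = J$. No gaps.
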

\begin{proof}
	 In line 1 of Algorithm~\ref{alg:findubs},
the set $Q$ is initialized with $\Sup{\varphi}$, which has finitely
many variables.  Subsequently, in each iteration of the repeat-until
loop in lines 2--10, exactly one variable is removed from $Q$, and no
other variable is added to $Q$.  Therefore, $Q$ must become empty
after $|\Sup{\varphi}|$ iterations of the loop, leading to termination
of Algorithm~\ref{alg:findubs}.

Let $J^*$ and $Q^*$ denote the values of $J$ and $Q$, respectively, on
termination.  From Lemma~\ref{lem:findubs-invariant}, we know that
there is a minimal UBS $\UU^*$ of $\PP$ in $\varphi$ such that
$J^* \subseteq \UU^* \subseteq J^* \cup Q^*$.  However, $Q^*
= \emptyset$ on termination of the algorithm.  Hence $J^* = \UU^*$,
and Algorithm~\ref{alg:findubs} terminates with a minimal UBS of $\PP$
in $\varphi$.
\end{proof}

The overall algorithm for computing an upper bound of the
projected model count of a CNF formula using {\UBS} is shown in
Algorithm~\ref{alg:arjunubs}.  This algorithm takes two timeout
parameters, $\tau_{\mathrm{pre}}$ and $\tau_{\mathrm{count}}$. These
are used to limit the times taken for computing a UBS using algorithm
{\findubs}, and for computing a projected model count of $\varphi$ on
the computed UBS, respectively.

\begin{algorithm}[h]
\small
\caption{{\arjunubs}$(\varphi, \PP, \varepsilon, \delta, \tau_{\mathrm{pre}}, \tau_{\mathrm{count}})$}
\label{alg:arjunubs}

\begin{algorithmic}[1]
\State $\mathcal{U} \gets$ {\findubs}$(\varphi, \PP)$ with timeout $\tau_{\mathrm{pre}}$;
\If {call to {\findubs} times out}
 \State $\mathcal{U} \gets \PP$;
\EndIf
\State \Return $\mathsf{ComputeCount}(\varphi, \mathcal{U}, \varepsilon, \delta)$
\end{algorithmic}
\end{algorithm}

\begin{restatable}{theorem}{ubscounterCorrect}\label{thm:ubs-counter-correct}
Given a CNF formula $\varphi$, a projection set $\PP$, parameters
$\varepsilon~(> 0)$ and $\delta~(0 < \delta < 1)$, and given access to a PAC counter, $\mathsf{ComputeCount}$, suppose
Algorithm~\ref{alg:arjunubs} returns a count $c$.  Then for every
choice of $\ChooseNextVar$, we have
$\Prob{|\ProjectSatisfying{\varphi}{\PP}| \le (1 + \varepsilon)\cdot
  c} \ge 1 - \delta$.
\end{restatable}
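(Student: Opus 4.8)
The plan is to reduce the statement to two facts that are already available: that the set $\mathcal{U}$ passed to $\mathsf{ComputeCount}$ in Algorithm~\ref{alg:arjunubs} is \emph{always} a UBS of $\PP$ in $\varphi$, and that the projected count on any UBS upper bounds the projected count on $\PP$ (the monotonicity lemma stated immediately after Definition~\ref{def:gis-ubs-lbs}). Once these two are in hand, the {\PAC} guarantee of $\mathsf{ComputeCount}$ supplies the single probabilistic step.

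\emph{Step 1: the set fed to the counter is always a UBS.} I would do a deterministic case split on whether the call to {\findubs} in line~1 times out. If it does not time out, then by Theorem~\ref{thm:findubs-minimal}, which holds for \emph{every} choice of $\ChooseNextVar$, the returned $\mathcal{U}$ is a minimal UBS of $\PP$ in $\varphi$, hence in particular a UBS. If it does time out, line~3 sets $\mathcal{U} = \PP$, and $\PP$ is trivially a UBS of $\PP$ in $\varphi$, since in Definition~\ref{def:gis-ubs-lbs} the relation $(\proj{\sigma_1}{\PP} = \proj{\sigma_2}{\PP}) \bowtie (\proj{\sigma_1}{\PP} = \proj{\sigma_2}{\PP})$ holds with $\bowtie$ equal to $\Leftrightarrow$, and a fortiori with $\bowtie$ equal to $\Rightarrow$. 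So in both branches $\mathcal{U}$ is a UBS of $\PP$ in $\varphi$, and this conclusion does not depend on $\ChooseNextVar$.

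\emph{Step 2: combine monotonicity with the {\PAC} guarantee.} Since $\mathcal{U}$ is a UBS of $\PP$ in $\varphi$, the lemma following Definition~\ref{def:gis-ubs-lbs} gives the deterministic inequality $|\ProjectSatisfying{\varphi}{\PP}| \le |\ProjectSatisfying{\varphi}{\mathcal{U}}|$. The count $c$ returned in line~5 is the output of the {\PAC} counter $\mathsf{ComputeCount}$ invoked on $\varphi$, $\mathcal{U}$, $\varepsilon$, $\delta$; by the definition of a {\PAC} counter in Section~\ref{sec:prelim} we have $\Prob{\frac{|\ProjectSatisfying{\varphi}{\mathcal{U}}|}{1+\varepsilon} \le c \le (1+\varepsilon)\,|\ProjectSatisfying{\varphi}{\mathcal{U}}|} \ge 1-\delta$, and dropping the unused lower bound yields $\Prob{|\ProjectSatisfying{\varphi}{\mathcal{U}}| \le (1+\varepsilon)\,c} \ge 1-\delta$. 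Chaining this with the deterministic inequality from Step~1 gives $\Prob{|\ProjectSatisfying{\varphi}{\PP}| \le (1+\varepsilon)\,c} \ge 1-\delta$, which is exactly the claim.

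\emph{Where the care is needed.} There is no genuinely hard step; the subtleties are bookkeeping. The probability is taken over the internal randomness of $\mathsf{ComputeCount}$ only, whereas {\findubs} is deterministic for a fixed $\ChooseNextVar$, so the timeout test and the identity of $\mathcal{U}$ are deterministic events and no conditioning or union bound is required. It is also worth stating explicitly that only one side of the two-sided {\PAC} bound is used, so $\delta$ need not be split. One may add, as a remark, that if $\ChooseNextVar$ is biased to select variables from within $\PP$ then $\mathcal{U}$ is additionally an IS of $\PP$, in which case the full two-sided {\PAC} guarantee holds for $|\ProjectSatisfying{\varphi}{\PP}|$ itself; but that strengthening is not needed for this theorem.
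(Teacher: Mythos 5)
Your proof is correct and follows essentially the same route as the paper's: establish that the set passed to $\mathsf{ComputeCount}$ is a UBS, apply the monotonicity $|\ProjectSatisfying{\varphi}{\PP}| \le |\ProjectSatisfying{\varphi}{\mathcal{U}}|$, and then use one side of the {\PAC} guarantee. Your explicit case split on the timeout branch (where $\mathcal{U}=\PP$ is trivially a UBS) is a small point of added care that the paper's proof leaves implicit.
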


\begin{proof} From Theorem~\ref{thm:findubs-minimal},
we know that when Algorithm~\ref{alg:arjunubs} reaches line 4, the set
$\UU$ is a UBS of $\PP$ in $\varphi$.  Hence
$|\ProjectSatisfying{\varphi}{\PP}| \leq
|\ProjectSatisfying{\varphi}{\UU}|$.  Furthermore, since $\mathsf{ComputeCount}$ is a PAC counter, we have
$\Prob{\frac{|\ProjectSatisfying{\varphi}{\UU}|}{1+\varepsilon} \leq c \leq (1+\varepsilon)\cdot|\ProjectSatisfying{\varphi}{\UU}|} $ $ \ge 1- \delta$.
The theorem now follows easily from the above two observations.
\end{proof}
Theorem~\ref{thm:ubs-counter-correct} provides the weakest guarantee
for Algorithm {\arjunubs}, ignoring the details of sub-routine
$\ChooseNextVar$. In practice, the specifics of $\ChooseNextVar$ can
be factored in to strengthen the guarantee, including providing
PAC-style guarantees in the extreme case if $\ChooseNextVar$ always
chooses variables from the projection set $\PP$.  A more detailed
analysis of {\arjunubs}, taking into account the specifics of
$\ChooseNextVar$ is beyond the scope of this paper.

\section{Experimental Evaluation}\label{sec:evaluation}

To evaluate practical efficiency of {\arjunubs}, we implemented a prototype in C++. Our prototype implementation builds on {\arjun}~\cite{SM21}. %
To demonstrate practical impact of computation of {\UBS} on model counting, we first compute {\UBS} and pass the UBS as a projection set along with the formula $\varphi$ to the state of the art hashing-based approximate counter {\ApproxMCFour}~\cite{SGM20}. 

 We chose {\ApproxMCFour} because it is a highly competitive projected model counter, a version of which won the 2020 model counting competition in the projected counting track. In this context, it is worth remarking that the 2021 competition sought to focus on the exact counters (which do not rely on hashing-based methods) and consequently, changed required $\varepsilon = 0.1$ to $\varepsilon = 0.01$. Since the complexity of approximate techniques has $\varepsilon^{-2}$ dependence, this led to nearly 100$\times$ penalty in runtime for the approximate techniques.  Even then, {\ApproxMCFour}-based entry achieved 3rd place. As described during the competitive event presentation at the SAT 2021, had $\varepsilon$ been set to $0.05$, the {\ApproxMCFour}-based entry would have won the competition. All prior applications and benchmarking for approximation techniques have been presented with $\varepsilon=0.8$ in the literature, and we continue to use the same value of $\varepsilon$ in this work.

We use {\UBS}+{\ApproxMCFour} to denote the case when {\ApproxMCFour} is invoked with the computed {\UBS} as the projection set while we use {\IS}+{\ApproxMCFour} to refer to the version of {\ApproxMCFour} invoked with {\IS} as the projection set.

	\paragraph{Benchmarks}
	Our benchmark suite consisted of 2632 instances, which are categorized into four categories:
	BNN, Circuit, QBF-exist and QBF-circuit. 
	BNN benchmarks are adapted from \cite{BSSM+19}. Each instance contains CNF encoding of binarized neural networks (BNN) and constraints from properties of interest such as robustness, cardinality, and parity. The projection set $\mathcal{P}$ is set to variables for a chosen layer. 
	The class `Circuit' refers to instances from~\cite{CMV14}, which encode circuits arising from ISCAS benchmarks conjuncted with random parity constraints imposed on output variables. The projection set, as set by authors in~\cite{CMV14}, corresponds to output variables. The benchmarks with `QBF' are based on instances from Prenex-2QBF track of QBFEval-17\footnote{http://www.qbflib.org/qbfeval17.php}, QBFEval-18\footnote{http://www.qbflib.org/qbfeval18.php}, disjunctive~\cite{ACJS17}, arithmetic~\cite{TV17} and factorization~\cite{ACJS17}. 
	Each `QBF-exist' benchmark is a CNF formula transformed from a QBF instance. We remove quantifiers for the (2-)QBF instance and set the projection set to existentially quantified variables. 
	The class `QBF-circuit' refers to circuits synthesized using the state of the art functional synthesis tool, Manthan~\cite{GRM20}. The projection set is set to output variables.

	Experiments were conducted on a high-performance computer cluster, each node consisting of 2xE5-2690v3 CPUs with 2x12 real cores and 96GB of RAM. The model counter with per preprocessing technique on a particular benchmark runs on a single core. We set the time limit as 5000 seconds for preprocessing and counting respectively and the memory limit as 4GB. The maximal number of conflicts is 100k for each candidate variable during preprocessing.  To compare runtime performance, we employ PAR-2 scores, which is a standard in SAT community, in the context of SAT competition. Each benchmark contributes a score that is the number of seconds used by the corresponding tool to successfully finish the execution or in case of a timeout or memory out, twice the timeout in seconds. We then calculate the average score for all benchmarks, giving PAR-2.

	We sought to answer the following research questions: 
	\begin{description}
		
		\item[RQ 1] Does the usage of {\UBS} enable {\ApproxMCFour} to solve more benchmarks in comparison to the usage of {\IS} ? 
		\item[RQ 2] How does the quality of counts computed by {\UBS}+{\ApproxMCFour} vary in comparison to {\IS}+{\ApproxMCFour}?
		\item [RQ 3] How does the runtime behavior of {\UBS}+{\ApproxMCFour} compare with that of {\IS}+{\ApproxMCFour}?
	\end{description}

	\paragraph{Summary}
	In summary, {\UBS}+{\ApproxMCFour} solves 208 more instances than {\IS}+{\ApproxMCFour}. Furthermore, while computation of {\UBS} takes 777 more seconds, the PAR-2 score of {\UBS}+{\ApproxMCFour} is 817 seconds less than that of {\IS}+{\ApproxMCFour}. Finally, for all the instances where both {\UBS}+{\ApproxMCFour} and {\IS}+{\ApproxMCFour} terminated the geometric mean of log-ratio of counts returned by {\IS}+{\ApproxMCFour} and {\UBS}+{\ApproxMCFour} is 1.32, therefore, indicating that {\UBS}+{\ApproxMCFour} provides accurate estimates and therefore, can be used as substitute for {\IS}+{\ApproxMCFour} in the context of applications that primarily care about upper bounds. 
	
	In this context, it is worth highlighting that since there has been considerable effort in recent years in the optimizing computation of {\IS}, one would expect that further engineering efforts would lead to even more saving in runtime in the usage of {\UBS}.

	\subsection{Number of Solved Benchmarks}

	\begin{table}[htb]
	\centering
	\setlength{\tabcolsep}{9pt}
	\begin{tabular}{c c c c c}
		\toprule
		Benchmarks & Total & VBS & {\IS}+{\ApproxMCFour} & {\UBS}+{\ApproxMCFour}  \\
		\midrule
		BNN & 1224 & 868 & 823 & 823   \\
		Circuit & 522 & 455 & 407 & \textbf{435}  \\
		QBF-exist & 607 & 314 & 156 & \textbf{291}   \\
		QBF-circuit & 279 & 152 & 100 & \textbf{145}  \\
		\bottomrule
	\end{tabular}
	\caption{The number of solved benchmarks. }
	\label{tab: number solved}
\end{table}
	
	Table~\ref{tab: number solved} compares the number of solved benchmarks by {\IS}+{\ApproxMCFour} and {\UBS}+{\ApproxMCFour}. Observe that the usage of {\UBS} enables {\ApproxMCFour} to solve 435, 291, and 145 instances on Circuit, QBF-exist, and QBF-circuit benchmark sets respectively while the usage of {\IS}+{\ApproxMCFour} solved 407, 156 and 100 instances. In particular, {\UBS}+{\ApproxMCFour} solved almost twice as many instances on QBF-exist benchmarks. 
	
	The practical adoption of tools for NP-hard problems often rely on portfolio solvers.  Therefore, from the perspective of practice, one is often interested in evaluating the impact of a new  technique to the portfolio of existing state of the art. 
	To this end, we often focus on Virtual Best Solver (VBS), which can be viewed as an ideal portfolio.  An instance is considered to be solved by VBS if is solved by at least one solver in the portfolio; that is, VBS is at least as powerful as each solver in the portfolio. Observe that on BNN benchmarks, while {\UBS}+{\ApproxMCFour} and {\IS}+{\ApproxMCFour} solved the same number of instances but VBS solves 45 more instances as there were 45 instances that were solved by {\UBS}+{\ApproxMCFour} but not {\IS}+{\ApproxMCFour}.

	\subsection{Time Analysis}

	To analyze the runtime behavior, we separate the preprocessing time (computation of {\UBS} and {\IS}) and the time taken by {\ApproxMCFour}. Table \ref{tab: time} reports the mean of preprocessing time over benchmarks and the PAR-2 score for counting time. 
	The usage of {\UBS} reduces the PAR-2 score for counting from  from 3680, 2206, 7493, and 6479 to 3607, 1766, 5238, and 4829 on four benchmark sets, respectively. Remarkably, {\UBS} reduces PAR-2 score by over 2000 seconds on QBF-exist benchmarks and over 1000 seconds on QBF-circuit, which is a significant improvement. 

	\begin{table}[htb]
		\centering
		\begin{tabular}{c c c c c}
			\toprule
			&\multicolumn{2}{c}{Preprocessing time} & \multicolumn{2}{c}{PAR-2 score of counting time} \\
			\cmidrule(lr){2-3}\cmidrule(lr){4-5}
			Benchmarks  & 	\hphantom{0000}{\IS} (s)\hphantom{0000} & \hphantom{0000}{\UBS} (s)\hphantom{0000} & \hphantom{0000}{\IS} (s)\hphantom{0000} & \hphantom{0000}{\UBS} (s)\hphantom{0000} \\
			\midrule
			BNN & {2518}& 2533 & 3680 &\textbf{3607}\\
			Circuit & {\hphantom{0}229}& \hphantom{0}680 & 2206 &\textbf{1766}\\
			QBF-exist& \hphantom{00}{70}& 2155 & 7493 &\textbf{5238}\\
			QBF-circuit & {\hphantom{0}653}& 2541 & 6479 &\textbf{4829} \\
			\bottomrule
		\end{tabular}
		
		\caption{The mean of preprocessing time and PAR-2 score of counting time}
		\label{tab: time}
	\end{table}

	Observe that the mean time taken by {\UBS} is higher than that of {\IS} across all four benchmark classes. Such an observation may lead one to wonder whether savings due to {\UBS} are indeed useful; in particular, one may argue that if the total time of {\IS}+{\ApproxMCFour} is set to 10,000 seconds so that the time remaining after {\IS} computation can be used by {\ApproxMCFour}. We observe that even in such a case, {\IS}+{\ApproxMCFour} is able to solve only four more instances than Table~\ref{tab: number solved}.\footnote{Exclude BNN benchmarks because the computation of {\UBS} and {\IS} takes similar time on BNN benchmarks.} To further emphasize, {\UBS}+{\ApproxMCFour} where {\ApproxMCFour} is allowed a timeout of 5000 seconds can still solve more instance than {\IS}+{\ApproxMCFour} where {\ApproxMCFour} is allowed a timeout of $10,000 - t_{\IS}$ where $t_{\IS}$ is time taken to compute {\IS} with a timeout of 5000 seconds.

	\paragraph{Detailed Runtime Analysis}
		\begin{table*}[htb]
		\centering
		\begin{adjustbox}{width=1\textwidth}
			\begin{tabular}{c r r r r c r c c}
				\toprule
				&& & 	 
				\multicolumn{3}{c}{{\IS}+{{\ApproxMCFour}}} &  \multicolumn{3}{c}{{\UBS}+{\ApproxMCFour}} \\
				\cmidrule(lr){4-6}\cmidrule(lr){7-9}
				
				Benchmarks & $|X|$ & $|\ensuremath{\mathcal{P}}|$ & 	$|\text{{\IS}}|$ & {Time (s)} &  Count & 	$|\text{{\UBS}}|$ &  Time (s) & Count \\
				\midrule
				amba2c7n.sat & 1380 & 1345 & 				313 & \hphantom{}0.24+2853 & $50*2^{65\hphantom{00}}$ & 					73 & \hphantom{00}17+1\hphantom{000}	& $63*2^{67\hphantom{0}}$ \\
				bobtuint31neg & 1634 & 1205 & 				678 & \hphantom{}0.37+5000 & $-$ &											417 & \hphantom{0}148+16\hphantom{00} & $64*2^{411}$ \\
				ly2-25-bnn\_32-bit-5-id-11	& 131 & 32 & 	32 & 1313+3416 & $94*2^{9\hphantom{000}}$ &								59 & 2113+1034 &	$63*2^{10\hphantom{0}}$ \\
				ly3-25-bnn\_32-bit-5-id-10 & 131 & 32 & 		32 & 1389+5000 & $-$ & 														61 & 2319+841\hphantom{0} & $60*2^{9	\hphantom{00}}$ \\
				floor128	& 891 & 879 &					254 & \hphantom{}0.07+5000& $-$	&											256 & \hphantom{000}9+6\hphantom{000} & $64*2^{250}$ \\
				s15850\_10\_10.cnf & 10985 & 684 & 			605 & \hphantom{}0.50+5000	& $-$ & 											600 & \hphantom{00}41+2070 & $50*2^{566}$ \\
				arbiter\_10\_5 & 23533 & 129 & 				118 & \hphantom{}0.71+4\hphantom{000} & $64*2^{112\hphantom{0}}$ &		302 & \hphantom{000}7+5000 & $-$	\\
				cdiv\_10\_5 & 101705 & 128 & 				60 & \hphantom{0}102+50\hphantom{00} & $72*2^{50\hphantom{00}}$ &		$-$ & 5000+5000 & $-$ \\
				rankfunc59\_signed\_64 & 5140 & 4505 & 		1735 & \hphantom{000}3+274\hphantom{0} & $43*2^{1727}$ & 				$-$ & 5000+5000 & $-$ \\
				
				\bottomrule
			\end{tabular}
		\end{adjustbox}
		
		\caption{Performance comparison of {\UBS} vs. {\IS}. The runtime is reported in seconds and ``$-$' in a column reports timeout after 5000 seconds.}
		\label{tab: case study}
	\end{table*}

	Table~\ref{tab: case study} presents the results over a subset of benchmarks. Column 1 of the table gives the benchmark name, while columns 2 and 3 list the size of support $X$ and the size of projection set $\mathcal{P}$, respectively. Columns 4-6 list the size of computed {{\IS}}, runtime of {\IS}+{\ApproxMCFour}, and model count over {\IS} while columns 7-9 correspond to {\UBS}. Note that the time is represented in the form $t_p + t_c$ where $t_p$ refers to the time taken by {\IS} (resp. {\UBS}) and $t_c$ refers to the time taken by {\ApproxMCFour}. We use `$-$' in column 6 (resp. column 9) for the cases where {\IS}+{\ApproxMCFour}  (resp. {\UBS}+{\ApproxMCFour}) times out.  
	For example, on benchmark amba2c7n.sat, the computation of {\IS} takes 0.24 seconds and returns an independent support of size 313 variables while the computation of {\UBS} takes 17 seconds and the computed {\UBS} has 73 variables. Furthermore,  {\ApproxMCFour} when supplied with {\IS} takes 2853 seconds while {\ApproxMCFour} with {\UBS} takes only one second. The model count returned by {\ApproxMCFour} in case of {\IS} is  $50*2^{65}$ while the count returned by {\ApproxMCFour} in case of {\UBS}  is $63*2^{67}$. 
	
	The benchmark set was chosen to showcase different behaviors of interest: First, we observe that the  smaller size of {\UBS} for amba2c7n.sat leads {\UBS}+{\ApproxMCFour} while {\IS}+{\ApproxMCFour} times out. It is, however, worth emphasizing that the size of {\UBS} and {\IS} is not the only factor. To this end, observe that for the two benchmarks arising from BNN, represented in the third and fourth row, even though the size of {\UBS} is large, the runtime of {\ApproxMCFour} is still improved. 
Furthermore, in comparison to {\IS}, our implementation for {\UBS} did not explore engineering optimization, which shows how {\UBS} computation times out in the presence of the large size of support. Therefore, an important direction of future research is to further optimize the computation of {\UBS} to fully unlock the potential of {\UBS}.

	\subsection{Quality of Upper Bound}
	
	To evaluate the quality of upper bound, we compare the counts computed by {\UBS}+{\ApproxMCFour} with those of {\IS}+{\ApproxMCFour} for 1376 instances where both {\IS}+{\ApproxMCFour} and {\UBS}+{\ApproxMCFour} terminated.  Suppose $C_\text{IS}$ and $C_\text{UBS}$ denote the model count on {\IS} and {\UBS} respectively. The error $\mathsf{Error}$ is computed by $\mathsf{Error} = \log_2 C_\text{UBS}-\log_2 C_\text{IS}$.  Figure \ref{fig: count error} shows $\mathsf{Error}$ distribution over benchmarks. A point $(x, y)$ represents $\mathsf{Error} \le y$ on the first $x$ benchmarks. For example, the point $(1000, 2.2)$ means the $\mathsf{Error}$ is not more than 2.2 on one thousand benchmarks. Overall, the geometric mean of $\mathsf{Error}$ is just 1.32.
	Furthermore, for more than 67\% benchmarks the $\mathsf{Error}$ is less than one, and for 81\% benchmarks,  the the $\mathsf{Error}$ is less than five while on only 11\% benchmarks the $\mathsf{Error}$ is larger than ten. To put the significance of  $\mathsf{Error}$ margin in the context, we refer to the recent survey~\cite{APM21} comparing several partition function estimation  techniques, wherein a method with  $\mathsf{Error}$ less than 5 is labeled as a {\em reliable method}. It is known that partition function estimate reduces to model counting, and the best performing technique identified in that study relies on model counting.

	\begin{figure}[h!]
		\centering
		\includegraphics[scale=0.9]{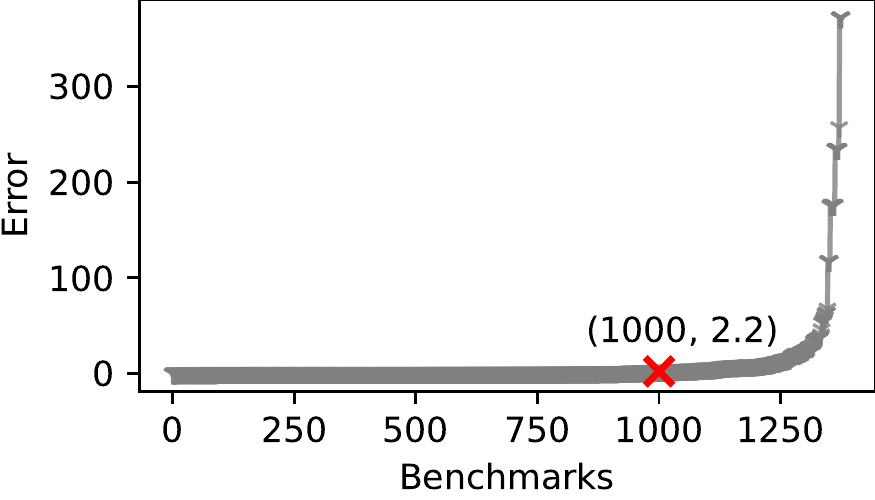}
		\caption{Error of upper bound.}
		\label{fig: count error}
	\end{figure}

\section{Conclusion}\label{sec:conclusion}
In this work, we introduced the notion of Upper Bound Support ({\UBS}), which generalizes the well-known notion of independent support. We then observed that the usage of {\UBS} for generation of XOR constraints in the context of approximate projected model counting leads to the computation of upper bound of projected model counts. Our empirical analysis demonstrates that {\UBS}+{\ApproxMC} leads to significant runtime improvement in terms of the number of instances solved as well as the PAR-2 score. Since identification of the importance of {\IS} in the context of counting led to follow-up work focused on efficient computation of {\IS}, we hope our work will excite the community to work on efficient computation of {\UBS}. 
\section*{Acknowledgments}
This work was supported in part by National Research Foun-
dation Singapore under its NRF Fellowship Programme[NRF-
NRFFAI1-2019-0004 ] and AI Singapore Programme [AISG-
RP-2018-005], and NUS ODPRT Grant [R-252-000-685-13].
The computational work for this article was performed on
resources of the National Supercomputing Centre, Singapore
(https://www.nscc.sg).

\end{document}